\documentclass[letterpaper]{article}
\usepackage{aaai24} 
\usepackage{times} 
\usepackage{helvet} 
\usepackage{courier} 
\usepackage[hyphens]{url} 
\usepackage{graphicx} 
\urlstyle{rm} 
\usepackage{graphicx} 
\usepackage{natbib} 
\usepackage{caption} 
\usepackage{enumitem}
\frenchspacing 
\setlength{\pdfpagewidth}{8.5in} 
\setlength{\pdfpageheight}{11in} 
%
\pdfinfo{
/TemplateVersion (2024.1)
}

\setcounter{secnumdepth}{1} 

\usepackage[svgnames]{xcolor}

\newcommand{\paragraphHack}[1]{\paragraph{#1}}

\newcommand{\bld}[1]{\boldmath\textbf{#1}\unboldmath}

\usepackage{xr} 
\usepackage[export]{adjustbox}
\usepackage{lscape}
\usepackage{makecell}
\usepackage{xspace}
\usepackage{cuted}
\usepackage{bm}

\usepackage[ruled,vlined,linesnumbered]{algorithm2e} 
\SetKwProg{function}{Function}{}{}

\usepackage{amsfonts}
\usepackage{amsmath}
\usepackage{amssymb}
\usepackage{amsthm}
\usepackage{mathrsfs}

\usepackage{tikz-network}
\usepackage{tikz}
\usetikzlibrary{fit, calc}
\usetikzlibrary{shapes.multipart}  

\newcommand{\astar}{A$^*$\xspace}
\newcommand{\ilao}{iLAO$^*$\xspace}
\newcommand{\cgilao}{CG-\ilao}
\newcommand{\lrtdp}{LRTDP\xspace}
\newcommand{\ftvi}{FTVI\xspace}
\newcommand{\ilaoH}[1]{iLAO\(^*_{\text{#1}}\)\xspace}
\newcommand{\cgilaoH}[1]{CG-\ilaoH{#1}}
\newcommand{\lrtdpH}[1]{LRTDP\(_{\text{#1}}\)\xspace}

\newcommand{\bestCovr}[1]{\ensuremath{\bm{#1}}}
\newcommand{\bestMean}[1]{\bm{#1}}


\newcommand{\hOptPerturbedW}{\ensuremath{h^{\text{pert}}_w}\xspace}

\newcommand{\supp}{\ensuremath{\text{succ}}}
\newcommand{\definedas}{\ensuremath{\coloneqq}}

\newcommand{\partssptuple}{\ensuremath{\langle \partstates, \sZ, \partgoals, \partactions, \pr,\C, \h \rangle}\xspace}
\newcommand{\partssp}{\ensuremath{\widehat{\ssp}}\xspace}
\newcommand{\partstates}{\ensuremath{\widehat{\Ss}}\xspace}
\newcommand{\partgoals}{\ensuremath{\widehat{\Sg}}\xspace}
\newcommand{\partactions}{\ensuremath{\widehat{\A}}\xspace}

\newcommand{\bw}{BW\xspace}
\newcommand{\exbw}{ExBW\xspace}
\newcommand{\tireworld}{\ensuremath{\text{TW}}\xspace}
\newcommand{\parc}{PARC\xspace}
\newcommand{\parcn}{PARC-N\xspace}
\newcommand{\parcr}{PARC-R\xspace}


\newcommand{\duals}{\ensuremath{\bm{\mathcal{V}}}\xspace}

\newcommand{\currp}{\ensuremath{\widehat{\p}_{\V}}\xspace}
\newcommand{\oldp}{\ensuremath{\widehat{\p}_{\text{old}}}\xspace}
\newcommand{\colsToCheck}{\ensuremath{\Gamma}\xspace}
\newcommand{\envelope}{\ensuremath{\mathcal{E}}\xspace}

\newcommand{\residual}{\ensuremath{\textsc{res}}\xspace}

\newcommand{\improvePolicy}{\ensuremath{\textsc{Backups}}}
\newcommand{\CGimprovePolicy}{\ensuremath{\textsc{CG-Backups}}}
\newcommand{\addStateActions}{\ensuremath{\textsc{Add-Actions}}}
\newcommand{\expandFringes}{\textsc{Expand-Fringes}}
\newcommand{\partiallyExpandFringes}{\textsc{Partly-Expand-Fringes}}
\newcommand{\fixViolatedConstrs}{\textsc{Fix-Constrs}}
\newcommand{\predecessors}{\ensuremath{\textsc{Preds}}\xspace}
\newcommand{\successors}{\ensuremath{\textsc{Ext-Succs}}\xspace}

\newcommand{\V}{\ensuremath{V}\xspace}
\newcommand{\Q}{\ensuremath{Q}\xspace}
\newcommand{\Qsa}{\ensuremath{Q(\s, \ac)}\xspace}
\newcommand{\qvalue}{\Q-value\xspace}
\newcommand{\qvalues}{\Q-values\xspace}

\newcommand{\econsistent}{\ensuremath{\epsilon\text{-consistent}}\xspace}
\newcommand{\econsistency}{\ensuremath{\epsilon\text{-consistency}}\xspace}


\newtheorem{definition}{Definition}

\newtheorem{theorem}{Theorem}

\newtheorem{lemma}{Lemma}


\usepackage{cleveref}
\crefname{algorithm}{alg.}{algs.}
\crefname{line}{line}{lines}
\crefname{definition}{defn.}{defns.}
\crefname{lemma}{lem.}{lem.s}
\crefname{theorem}{thm.}{thm.s}
\crefname{figure}{fig.}{figs.}
\crefname{table}{tab.}{tabs.}

\def\argmin{\mathop{\rm argmin}}


\newcommand{\setSymbol}[1]{\ensuremath{\mathsf{#1}}}

\newcommand{\probSymbol}[1]{\ensuremath{\mathbb{#1}}}


\newcommand{\Reals}{\ensuremath{\mathbb{R}}\xspace}

\newcommand{\Rp}{\ensuremath{\Reals_{> 0}}\xspace}

\newcommand{\s}{\ensuremath{s}\xspace}
\newcommand{\sZ}{\ensuremath{s_0}\xspace}

\newcommand{\p}{\ensuremath{\pi}\xspace}

\newcommand{\ssp}[1][]{\ifthenelse{\equal{#1}{}}{\probSymbol{S}\xspace}{\ensuremath{\probSymbol{S}^{#1}}\xspace}}
\newcommand{\Ss}{\setSymbol{S}\xspace}
\newcommand{\Sg}{\St}
\newcommand{\St}{\ensuremath{\setSymbol{G}}\xspace}
\newcommand{\A}{\setSymbol{A}\xspace}
\newcommand{\ac}{\ensuremath{a}\xspace}
\newcommand{\pr}{\ensuremath{P}\xspace}
\newcommand{\C}[1][]{\ifthenelse{\equal{#1}{}}{\ensuremath{C}}{\ensuremath{C_{#1}}}\xspace}
\newcommand{\sspTuple}{\ensuremath{\langle \Ss, \sZ, \St, \A, \pr,\C\rangle}\xspace}


\newcommand{\hMax}{\ensuremath{h^\text{max}}\xspace}
\mathchardef\mhyphen="2D
\newcommand{\hLMcut}{\ensuremath{h^\text{lmc}}\xspace}


\newcommand{\h}{\ensuremath{H}\xspace}
\newcommand{\fringe}{\setSymbol{F}\xspace}

\usepackage{mathtools}

\newcommand{\hROC}{\ensuremath{h^\text{roc}}\xspace}

%
%

\makeatletter 


\newcommand{\LPresized}[2]{\centerline{\resizebox{\linewidth}{!}{\begin{LPenv}{#1}#2\end{LPenv}}}}

\newcounter{constraintcounter}
\newcommand{\@tagconstrain}[1]{\refstepcounter{constraintcounter}\tag{C\theconstraintcounter}\label{#1}}

\newcommand{\@constr}[2]{& \mathrlap{#1} & #2 \notag}
\newcommand{\@@constr}[3]{& \mathrlap{#1} & #2 \@tagconstrain{#3}}
\newcommand{\@longconstr}[2]{& \mathrlap{#1} \notag \\[-3mm] & & #2 \notag}
\newcommand{\@@longconstr}[3]{& \mathrlap{#1} \notag \\[-3mm] & & #2 \@tagconstrain{#3}}


\newcounter{lpcounter}
\newcommand{\@taglp}[1]{\refstepcounter{lpcounter}\tag{LP~\thelpcounter}\label{#1}}
\newcommand{\@obj}[2]{#1 \quad & \mathrlap{#2} \notag}
\newcommand{\@@obj}[3]{#1 \quad & \mathrlap{#2} \@taglp{#3}}

\newcommand{\@st}{\mathrm{s.t.} \quad}

\newenvironment{LPenv}[1]
{\minipage[c]{#1}%
\def\st{\@st}%
\def\obj{\@ifstar\@obj\@@obj}%
\def\constr{\@ifstar\@constr\@@constr}%
\def\longconstr{\@ifstar\@longconstr\@@longconstr}%
\flalign}
{\endflalign\endminipage}

\makeatother

\usepackage{tabularx}
\usepackage{booktabs}

\title{Efficient Constraint Generation for Stochastic Shortest Path Problems}
\author {
       Johannes Schmalz,
    Felipe Trevizan
}
\affiliations {
       School of Computing, Australian National University \\
    johannes.schmalz@anu.edu.au, felipe.trevizan@anu.edu.au
}

\setlength\marginparwidth{13mm}

\begin{document}

\maketitle

\begin{abstract}
Current methods for solving Stochastic Shortest Path Problems (SSPs) find states' costs-to-go by applying Bellman backups, where state-of-the-art methods employ heuristics to select states to back up and prune.
A fundamental limitation of these algorithms is their need to compute the cost-to-go for every applicable action during each state backup, leading to unnecessary computation for actions identified as sub-optimal.
We present new connections between planning and operations research and, using this framework, we address this issue of unnecessary computation by introducing an efficient version of constraint generation for SSPs.
This technique allows algorithms to ignore sub-optimal actions and avoid computing their costs-to-go.
We also apply our novel technique to \ilao resulting in a new algorithm, \cgilao.
Our experiments show that \cgilao ignores up to \(57\%\) of \ilao's actions and it solves problems up to \(8\times\) and \(3\times\) faster than \lrtdp and \ilao.

\end{abstract}

\section{Introduction}

Planning is an important facet of AI that gives efficient algorithms for solving current real-world problems.
Stochastic Shortest Path problems (SSPs)~\cite{Bertsekas1991:SSPs} generalise classical (deterministic) planning by introducing actions with probabilistic effects, which lets us model problems where the actions are intrinsically probabilistic.
Value Iteration (VI)~\cite{Bellman57} is a dynamic programming algorithm that forms the basis of optimal algorithms for solving SSPs.
VI finds the cost-to-go for each state, which describes the solution of an SSP.
A state \s's cost-to-go is the minimum expected cost of reaching a goal from \s, and similarly a action \ac's cost-to-go is the minimum after applying \ac.
VI finds the optimal cost-to-go by iteratively applying \emph{Bellman backups}, which update each state's cost-to-go with the minimal outgoing action's cost-to-go.

\lrtdp~\cite{Bonet2003:lrtdp} and \ilao~\cite{Hansen2001:ilao}, the state-of-the-art algorithms for optimally solving SSPs, build on VI and offer significant speedup by using heuristics to apply Bellman backups only to promising states and pruning states that are deemed too expensive.
A shortcoming of such algorithms is that each Bellman backup must consider all applicable actions.
For instance, let \s and \ac be a state and an applicable action, even if all successors of \ac will be pruned because they are too expensive, a Bellman backup on \s still computes the \qvalue of \ac, so these algorithms can prune unpromising states but not actions.
This issue is compounded because algorithms for SSPs require arbitrarily many Bellman backups on each state \s to find the optimal solution, thus wasting time on computing \qvalues for such actions many times.

This issue of computing unnecessary \qvalues for a state \s is addressed by \emph{action elimination}~\cite{Bertsekas1995}, which can be implemented in search algorithms to prune useless actions.
Action elimination looks for pairs~\((\ac, \widehat{\ac})\) of applicable actions in a state, such that a lower bound on \(\widehat{\ac}\)'s cost-to-go exceeds an upper bound on \ac's cost-to-go, in which case \(\widehat{\ac}\) is proved to be a useless action and can be pruned.
Although domain-independent lower bounds~(heuristics) can be computed efficiently, finding an efficient, domain-independent upper bound remains an open question to the best of our knowledge.
This gap has limited the use of action elimination in domain-independent planning.
In the context of optimal heuristic planning for SSPs, the only algorithm we are aware of that utilises action elimination to prune actions is \ftvi~\cite{Dai2009:ftvi}.
Other algorithms, such as BRTDP~\cite{McMahan2005:BRTDP}, FRTDP~\cite{smith06:frtdp} and VPI-RTDP~\cite{sanner09:vpirtdp}, use upper bounds in conjunction with lower bounds to guide their search.
However, unlike FTVI, they do not perform action elimination.

We present a general technique for ignoring actions that does not rely on upper bounds.
In contrast to action elimination that incrementally prunes useless actions, our approach initially treats all actions as inactive, i.e., not contributing to the solution.
It then iteratively adds actions to the search that are guaranteed to improve the current solution.
To develop our approach, we strengthen the connections between planning and operations research by relating heuristic search to \emph{variable} and \emph{constraint generation}.
Similar to heuristic search, variable and constraint generation enable the solving of large Linear Programs (LPs) by considering only a subset of variables and constraints.
We show that algorithms such as \ilao implicitly perform constraint generation, albeit in a trivial manner, to the LP encoding of VI.
Building on this, we introduce an efficient algorithm for constraint generation for SSPs that leads to inactive actions being ignored.
We apply our approach to \ilao to get the novel algorithm: \cgilao.

In our experiments, \cgilao solves problems up to \(8\times\) and \(3\times\) faster than \lrtdp and \ilao, respectively.
 \cgilao is faster than the others over various problem difficulties: its improvement is apparent from problems that require only 4 minutes to solve, and the improvement gap increases as problems take longer to solve.
To explain this, we quantify that \cgilao only considers 43--65\% of \ilao's total actions thus fewer actions' costs-to-go are computed.
Investigating further, we empirically show that \cgilao combines \ilao's efficient use of backups and \lrtdp's strong pruning capability, thereby displaying the best characteristics of both in a single algorithm.

The structure of this paper is as follows: we first introduce the background for SSPs and existing methods for solving SSPs.
Second, we give a brief background to linear programming and connect linear programming techniques to heuristic search.
Then we explain and motivate our novel algorithm \cgilao and prove its correctness.
Finally, we empirically evaluate the performance of \cgilao.

\section{Background}

A Stochastic Shortest Path problem (SSP)~\cite{Bertsekas1991:SSPs} is a tuple \sspTuple where:
\Ss is a finite set of states;
\(\sZ \in \Ss\) is the initial state;
\(\Sg \subset \Ss\) are the goal states with \(\Sg \neq \emptyset\);
\(\A\) is a finite set of actions and \(\A(\s) \subseteq \A\) denotes the actions applicable in state \s;
\(\pr(\s'|\s, \ac)\) gives the probability that applying action \ac in state \s results in state \(\s'\);
\(\C(\s, \ac) \in \Rp\) is the cost of applying \ac in \s.

The states immediately reachable by applying \ac to \s are called successors and are given by \(\supp(\s, \ac) \definedas \{\s' \in \Ss : \pr(\s' | \s, \ac) > 0\}\).
A solution to an SSP is given by a map \(\p: \Ss \to \A\), called a policy.
A policy \p is \textit{closed} w.r.t. \sZ if each state \s that can be reached by following \p from \sZ is either a goal or \p is defined for \s.
A policy is \textit{proper} w.r.t. \sZ if it reaches the goal with probability 1 from \sZ and it is \textit{improper} otherwise.
An optimal policy \(\p^*\) is any proper policy that minimises the expected cost to reach the goal from the initial state \sZ.

For simplicity, we make two standard assumptions: (i) there exists at least one proper policy w.r.t. \sZ, this is called the reachability assumption; and (ii) all improper policies have infinite expected cost.
A consequence of assumption (ii) is that $\A(\s) \neq \emptyset$ for all $\Ss \setminus \Sg$.
Note that we define \C to be strictly positive in order to avoid zero-cost cycles that would violate assumption (ii).
In our experiments, we relax the reachability assumption by applying the fixed-penalty transformation of SSPs~\cite{trevizan17:mcmp} resulting in a new SSP without dead ends.
Other approaches for handling SSPs such as S3P~\cite{Teichteil-Koenigsbuch2012:S3P} are also compatible with our approach.

An SSP's optimal solution is uniquely represented by the optimal value function \(\V^*\), which is the unique fixed-point solution to the Bellman equations:
\begin{equation}\label{eqn:bellman-equation}
\V(\s) =
\begin{cases}
0                            &\text{ if } \s \in \Sg \\
\min_{\ac \in \A(\s)} Q(s,a) &\text{otherwise}
\end{cases}
\quad
\forall \s \in \Ss
\end{equation}
where \(\Qsa \definedas \C(\s, \ac) + \sum_{\s' \in \A(\ac)} \pr(\s'|\s, \ac) \V(\s)\) is known as the \qvalue of \s and \ac.
A (optimal) value function \(\V(s)\) and the associated \qvalue \Qsa represent the (minimum) expected cost to reach the goal from state \s and after executing action \ac on state \s, respectively.
Given a value function \V, the policy associated with it is defined as \(\p_{\V}(\s) \definedas \argmin_{\ac \in \A(\s)} \Qsa\) and is known as the greedy policy for \V.
Ties can be broken arbitrarily thanks to assumption (ii), and for simplicity we assume some tie-breaking rules that ensure the greedy policy is unique.

The Bellman equations~\eqref{eqn:bellman-equation} can be iteratively solved by Value Iteration (VI)~\cite{Bellman57}:
VI starts with an arbitrary value function $\V^0$ and computes $\V^{t+1}(s) \definedas \min_{\ac \in \A(\s)} \Qsa$ over all states, where \Qsa uses the previous value function $\V^t$.
This process of computing $\V^{t+1}(s)$ using $V^t$ is called a Bellman backup.
VI is guaranteed to asymptotically converge to $\V^*$ regardless of $\V^0$.
For practical reasons, VI is terminated at iteration $t$ when the \emph{Bellman residual} $\residual(\s) \definedas |\V^t(\s) - \min_{a \in \A(s)} \Qsa|$ is less than or equal to $\epsilon \in \Rp$ for all $\s \in \Ss$.

Given a policy \p, its policy envelope $\Ss^\p \subseteq \Ss$ is the set of all reachable states when following \p from \sZ.
Note that VI explores the complete state space \Ss regardless of the optimal policy envelope \(\Ss^{\p*}\)'s size.
To address this shortcoming, heuristic search algorithms such as \ilao~\cite{Hansen2001:ilao} and \lrtdp~\cite{Bonet2003:lrtdp} use a heuristic function \h to initialise \(\V^0\), which guides the exploration of the state space \Ss in a way that expands as few states as possible.
To find \(\V^*\), heuristic search algorithms require the heuristic to be \emph{admissible}, that is, \(\h(s) \le \V^*(s) \; \forall \s \in \Ss\).
 Often heuristics are also \emph{monotonic}, which immediately implies admissibility.
A value function \V is monotonic if \(\V(\s) \leq \min_{\ac \in \A(\s)} \Qsa \; \forall \s \in \Ss\), and the definition is analogous for \h.
Similar to VI, heuristic search algorithms converge to \(\V^*\) asymptotically and require a practical stop criterion.
This stop criterion is known as \econsistency~\cite{Bonet2003:lrtdp} and is defined as:
\begin{definition}[\econsistency]\label{def:epsilon-consistency}
A value function \V is \econsistent if \(\residual(s) \le \epsilon \; \forall \s \in \Ss^{\p_{\V}}\).
\end{definition}

\noindent Notice that \econsistency checks the residual only on the policy envelope of a greedy policy and states outside the envelope are permitted to have a residual larger than $\epsilon$.

\begin{algorithm}[!t]
{\small
\DontPrintSemicolon
  \caption{\ilao}\label{alg:ilao}
          \function{\ilao\((\ssp, \h, \epsilon)\)} {
       \(\partssp \gets\) partial SSP \(\langle \{\sZ\}, \sZ, \{\sZ\}, \emptyset, \pr, \C, \h \rangle\) \;
    \(\V \gets \text{Value Function initialised by } \h\) \;
       \Repeat{\(\fringe = \emptyset\) and \(\oldp = \currp\) and \(\residual < \epsilon\)} {
      \(\envelope \gets\) post-order DFS traversal of \currp from \sZ \; \label{line:ilao:dfs}
      \(\partssp \gets \expandFringes{}(\ssp, \partssp, \envelope)\) \; \label{line:ilao:call-to-expandFringes}
      \(\fringe \gets \Ss^{\currp} \cap (\partgoals \setminus \Sg)\) \;
      \(\V, \residual, \oldp \gets \improvePolicy{}(\partssp, \envelope, \V, \fringe)\) \; \label{line:ilao:call-to-improvePolicy}
    }
    \Return \V
  }

     \function{\(\expandFringes{}(\ssp, \partssp, \envelope)\)}{
    \For{\(\s_f \in \envelope \cap (\partgoals \setminus \Sg)\)} {
           \(\partgoals \gets \partgoals \setminus \{\s_f\}\) \;
      \(\addStateActions(\ssp, \partssp, \s_f, \A(\s_f))\) \; \label{line:expandFringes:addStateActions}
                             } \label{line:expandFringes:column-generation-begin}
    \Return \partssp \label{line:ilao:expandFringes-end}
    \label{line:expandFringes:column-generation-end}
  } \label{line:ilao:expandFringes-begin}

  \function{\(\addStateActions{}(\ssp, \partssp, \s, \A')\)}{
    \(\partactions(\s) \gets \partactions(\s) \cup \A'\) \; \label{line:expandFringes:constraint-generation-end}
    \For{\(\ac \in \A'\)} {
      \(\partgoals \gets \partgoals \cup (\supp(\s, \ac) \setminus \partstates)\) \;
      \(\partstates \gets \partstates \cup \supp(\s, \ac)\) \;
    }
  }

     \function{\(\improvePolicy{}(\partssp, \envelope, \V, \fringe)\)} {
    \(\oldp \gets \currp\) \;
    \Repeat{\(\residual < \epsilon\) or \(\currp \neq \oldp\) or \(\fringe \neq \emptyset\)} {
      \(\residual \gets 0\) \;
      \For {\(\s \in \envelope \setminus \partgoals\)} {
        \(\Q_{\min} \gets \min_{\ac \in \partactions(\s)} \Qsa\) \;
        \(\residual \gets \max(|V(\s) - \Q_{\min}|, \residual)\) \;
        \(\V(\s) \gets \Q_{\min}\) \;
      }
    }
    \Return \(\V, \residual, \oldp\)
  }
}
\end{algorithm}

We close this section by reviewing \ilao~\cite{Hansen2001:ilao}.
\ilao~(\cref{alg:ilao}) is an iterative algorithm which works by incrementally growing a \emph{partial SSP}:\footnote{Called the explicit graph in the original paper.}
\begin{definition}[Partial SSP]\label{def:partial-ssp}
Given an SSP \sspTuple and a heuristic \h, a partial SSP \partssp is an SSP with \emph{terminal costs} defined by \(\partssp = \partssptuple\) with \(\partstates \subseteq \Ss, \partgoals \subset \partstates, \Sg \cap \partstates \subseteq \partgoals, \partactions(\s) \subseteq \A(\s) \; \forall \s \in \partstates\) and terminal cost \h.
\end{definition}

SSPs with terminal costs have a one-time terminal cost of \(\h(\widehat{g})\) that is incurred when \(\widehat{g} \in \partgoals\) is reached, so their Bellman equations are \eqref{eqn:bellman-equation} with the goal case replaced by \(\V(\widehat{g}) = \h(\widehat{g})\) for all \(\widehat{g} \in \partgoals\).
It is trivial to see that SSPs with terminal costs are equivalent to SSPs, and we use them to simplify our presentation.
In a partial SSP \partssp, we refer to states in $\partgoals \setminus \St$ as artificial goals, and we define \currp to be the greedy policy over \V restricted to \(\partstates \setminus \partgoals\).

At each iteration, \ilao expands its partial SSP \partssp by expanding the artificial goals reachable by \currp into regular states.
To expand \(\widehat{g} \in \partgoals \setminus \Sg\), \ilao adds \(\widehat{g}\)'s applicable actions to \partssp and adds new reachable states as artificial goals (\cref{alg:ilao} \cref{line:ilao:call-to-expandFringes}).
This artificial goal expansion is done to make \currp eventually closed w.r.t. \sZ for the original SSP \ssp.
Simultaneously, \ilao also works towards making \V \econsistent by applying a Bellman backup to all the states reachable by \currp (\cref{alg:ilao} \cref{line:ilao:call-to-improvePolicy}).
These Bellman backups are ordered by a post-order DFS traversal of \currp, so states that occur closer to artificial goals are updated first, and \sZ is updated last.
Note that, when a state is expanded, it may have a successor \s already within the partial SSP.
If this happens, the DFS in \cref{alg:ilao}~\cref{line:ilao:dfs} must keep traversing \currp from \s to ensure the policy envelope \envelope is accurate.

In the next section, we show how to interpret \ilao through the lens of Operations Research by relating it to techniques used for handling large linear programs.

\section{\ilao as Linear Program}

SSPs can be solved by the Linear Program (LP) presented in~\ref{lp:vi}.
This LP is known as the primal LP or VI LP since it directly encodes the Bellman equations.

\LPresized{9.8cm}{
\obj{\max_{\duals}}{V_{\sZ}}{lp:vi} \\
\st
\constr{\V_{\s} \leq \C(\s,\ac) +\!\sum_{\mathclap{\s' \in \Ss}}\pr(\s'|\s,\ac)\V_{\s'}}{\forall \s \in \Ss\!\setminus\!\St, \ac\!\in\!\A(\s)}{c:vi:consistency}\\
\constr{\V_g \leq 0}{\forall g \in \St}{c:vi:goal}
}\vspace{3mm}

\noindent
Each variable \(\V_{\s} \in \duals\) represents \(\V(\s)\), and for each state \(\s \in \Ss \setminus \Sg\) the relevant constraints \ref{c:vi:consistency} encode \(\V(\s) \leq \min_{\ac \in \A(\s)} \Qsa\).
When clear from context, we use \(\V(\s)\) to represent \(\V_{\s}\), and \Qsa for the right-hand side of \s and \ac's constraint~\ref{c:vi:consistency}.
Together with the objective that maximises \(\V(\sZ)\), we obtain the Bellman equations~\eqref{eqn:bellman-equation} for the states in the optimal policy envelope \(\Ss^{\p*}\), i.e., the constraints are active (tight) for the pairs \((\s, \p^*(\s))\) for all \(\s \in \Ss^{\p*}\) and inactive (slack) everywhere else.
We reframe \ilao's incremental growing of its partial SSP as
solving \ref{lp:vi} with \emph{variable and constraint generation}~\cite{Bertsimas1997:introduction}.\footnote{Variable generation is also known as column generation and constraint generation is also known as the cutting plane method.}

Variable generation is a technique from Operations Research that enables us to solve LPs with a large number of variables by considering only a subset of variables.
Given an LP with missing variables called the Reduced Master Problem (RMP), variable generation finds a set of variables outside the RMP whose addition lets the RMP's solution quality improve.
Variable generation provides a sound and complete method to select such variables and a stop criterion that ensures the optimal solution for the RMP is also optimal for the original LP.
Heuristic search algorithms, such as \ilao, can be seen as a variable generation algorithm over \ref{lp:vi}, where each of its partial SSPs represents an RMP with the subset of variables~\(\{\V_{\s} : \s \in \partstates\}\).
For \ilao, the variable selection mechanism is inherited from \astar and is represented by the expansion of the artificial goals reachable by \currp (\cref{alg:ilao} \crefrange{line:expandFringes:column-generation-begin}{line:expandFringes:column-generation-end}): for all \(s_f \in \Ss^{\currp} \cap (\partgoals \setminus \Sg)\) and \(\ac \in \A(s_f)\), we add the variables \(V_{\s}\) such that \(\s \in \supp(s_f, \ac)\) and \(\s \not\in \partstates\).

Constraint generation is a similar technique which enables the solving of LPs with a large (potentially infinite) number of constraints.
The key idea is that the optimal solution of an LP with many constraints only makes a small number of constraints active, thus only a subset of constraints is needed to characterise this optimal solution.
In constraint generation, the intermediate LPs are known as \emph{relaxed LPs} since they relax the original LP by removing one or more constraints.
Given a relaxed LP, constraint generation finds one or more constraints in the original LP that are violated by the optimal solution of the relaxed LP.
By iteratively adding these violated constraints and re-optimising the new relaxed LP, a sequence of relaxed LPs with an increasing number of constraints is generated.
When no violations are found, the optimal solution of the relaxed LP is an optimal solution for the original LP.
The algorithm used to check constraint violations is called a separation oracle and the effectiveness of constraint generation relies on the availability of an efficient separation oracle for the original LP.

In the case of \ilao, constraint generation adds all the actions of the expanded artificial goal (\cref{alg:ilao}~\cref{line:expandFringes:addStateActions}) where each action $\ac \in \A(\s_f)$ added to the partial SSP implicitly represents the constraint $V(s_f) \le Q(s_f,a)$.
This separation oracle is computationally cheap since no checks are performed to detect if this new constraint is needed or not, at the cost that inactive constraints are unnecessarily added to the partial SSP.
In the next section, we present our new algorithm that uses an efficient separation oracle that only adds violated constraints.

\section{\cgilao}

\begin{algorithm}[!t]
{\small
\DontPrintSemicolon
  \caption{\cgilao}\label{alg:cg-ilao}
          \function{\cgilao\((\ssp, \h, \epsilon)\)} {

       \(\partssp \gets\) partial SSP \(\langle \{\sZ\}, \sZ, \{\sZ\}, \emptyset, \pr, \C, \h \rangle\) \;
    \(\V \gets \text{value function initialised by } \h\) \;
                                                                               \Repeat{\(\fringe = \emptyset\) and \(\oldp = \currp\) and \(\residual \leq \epsilon\)} {
      \(\envelope \gets\) post-order DFS traversal of \currp from \sZ \; \label{line:cgilao:dfs}
                                    \(\partssp \gets \partiallyExpandFringes{}(\ssp, \partssp, \envelope, \V)\) \; \label{line:cgilao:call-to-partiallyExpandFringes}
      \(\fringe \gets \Ss^{\currp} \cap (\partgoals \setminus \Sg)\) \;
           \(\V, \residual, \oldp, \colsToCheck \gets \CGimprovePolicy{}(\ssp, \partssp, \envelope, \V, \fringe, \colsToCheck)\) \;
                \(\V, \residual, \colsToCheck, \partssp \gets \fixViolatedConstrs{}(\ssp, \partssp, V, \colsToCheck, \residual)\) \;
         } \label{line:cgilao:whileCondition}
    \Return \V
  }

  \function{\(\partiallyExpandFringes{}(\ssp, \partssp, \envelope, \V)\)}{
    \For{\(\s_f \in \envelope \cap (\partgoals \setminus \Sg)\)} {
      \(\partgoals \gets \partgoals \setminus \{\s_f\}\) \;
      $Q_{min} \gets \min_{a \in \A(\s_f)} Q(s_f,a)$ \;
      $\A' \gets \{a \in \A(s_f) | Q(s_f,a) = Q_{min}\}$ \;
      $\addStateActions(\ssp, \partssp, \s_f, \A')$ \;
    }
    \Return \partssp \;
  }

  \function{\(\CGimprovePolicy{}(\ssp, \partssp, \envelope, \V, \fringe, \colsToCheck)\)} {
    \(\oldp \gets \currp\) \;
    \Repeat{\(\residual \leq \epsilon\) or \(\currp \neq \oldp\) or \(\fringe \neq \emptyset\)}{
      \(\residual \gets 0\) \;
      \For {\(\s \in \envelope \setminus \partgoals\)} {
        \(\Q_{\min} \gets \min_{\ac \in \partactions(\s)} \Qsa\) \;
               \If {\(\Q_{\min} - \V(\s) > \epsilon\)} {
                                                                \(\colsToCheck \gets \colsToCheck \cup \successors(\s, \ssp, \partssp)\) \; \label{line:cgilao:v-increases}
                 } \label{line:cgilao:improvePolicy:check-violated-constraints-begin}
        \ElseIf {\(V(\s) - \Q_{\min} > \epsilon\)} {
                                                                                           \(\colsToCheck \gets \colsToCheck \cup \predecessors(\s, \ssp)\) \; \label{line:cgilao:v-decreases} \label{line:cgilao:improvePolicy:check-violated-constraints-end}
                 }
        \(\residual \gets \max(|\V(\s) - \Q_{\min}|, \residual)\) \;
        \(\V(\s) \gets \Q_{\min}\) \;
             }
    } \label{line:cgilao:improvePolicy:repeatCondition}
    \Return \(\V, \residual, \oldp, \colsToCheck\)
  }

  \function{\(\fixViolatedConstrs{}(\ssp, \partssp, \V, \colsToCheck, \residual)\)} {
             \(\colsToCheck' \gets \emptyset\) \;
    \For{\((\s,\ac) \in \colsToCheck \text{ s.t. } \V(\s) >  \Q(\s,\ac) + \epsilon\)} {
                                                                                 \If{\(\ac \not\in \partactions(\s)\)} {
        \(\partactions(\s) \gets \partactions(\s) \cup \{\ac\}\) \;
        \(\partgoals \gets \partgoals \cup (\supp(\s, \ac) \setminus \partstates)\) \;
        \(\partstates \gets \partstates \cup \supp(\s, \ac)\) \;
      }
      \(\residual \gets \max(V(\s) - \Qsa, \residual)\) \; \label{line:cgilao:fixViolatedConstrs:track-residual}
      \(\V(\s) \leftarrow \Qsa\) \; \label{line:cgilao:fixViolatedConstrs:fix-violation}
           \(\colsToCheck' \gets \colsToCheck' \cup \predecessors(\s, \ssp)\) \; \label{line:cgilao:fixViolatedConstrs:check-violated-constraints}
         }
    \Return \(\V, \residual, \colsToCheck', \partssp\)
  }
}
\end{algorithm}

In \ilao, as most search algorithms, each state is either unexpanded or fully expanded, i.e., either none or all of its applicable actions are considered and it is not possible to ignore just a subset of the applicable actions.
We start by defining which actions can be safely ignored in a partial SSP.

\begin{definition}[Inactive Action]
Consider an SSP \ssp, its partial SSP \partssp, a value function \V, and a state $\s \in \partstates$.
An action $\ac \in \A(\s) \setminus \partactions(\s)$ is inactive in state \s if $V(s) < Q(s,a).$
\end{definition}

An inactive action $\ac \in \A(\s) \setminus \partactions(\s)$ for $s \in \partstates$ represents the inactive constraint \ref{c:vi:consistency} for $s$ and $a$ in \ref{lp:vi}.
Since inactive actions are not in the partial SSP and their associated constraints are inactive, adding them to the partial SSP does not change the solution and only adds overhead in the form of \qvalue computation for sub-optimal actions.

We generalise \ilao by allowing states to be partially expanded, so these states only have a subset of actions available in the partial SSP.
Under the lens of linear programming, we use constraint generation to identify and add actions that may be needed to encode the optimal solution and to ignore inactive actions.
We call this algorithm Constraint-Generation \ilao (\cgilao).

\cgilao is presented in \cref{alg:cg-ilao}.
One of the defining changes from \ilao is in \cgilao's expanding phase (\cref{alg:cg-ilao} \cref{line:cgilao:call-to-partiallyExpandFringes}) where \partiallyExpandFringes{} only expands a state with the greedy actions on \V, rather than all the applicable actions.
This introduces two challenges:
(i) actions that were not added by the partial expansion may need to be added later when \V is more accurate; and
(ii) when we add such actions, \V must be updated to reflect \ac's availability.
If (ii) is not addressed, the reduction to \V offered by \ac is not propagated, potentially leading to a suboptimal solution since $V$ would overestimate \(\V^*\).
The key insight of our algorithm is that both of these challenges are instances of constraint violation.
Thus, we can solve both issues by finding which constraints are violated with a separation oracle and enforcing them in the style of constraint generation.

The trivial separation oracle checks all constraints \((\s, \ac)\) for \(\s\!\in\!\partstates\) and \(\ac\!\in\!\A(\s)\) for violations; this is needlessly expensive since some non-violated constraints remain non-violated from one iteration to the next.
Our separation oracle exploits this persistence between iterations by tracking changes in \V to compute a subset of constraints which could potentially be violated by the following rules (\cref{alg:cg-ilao} \crefrange{line:cgilao:improvePolicy:check-violated-constraints-begin}{line:cgilao:improvePolicy:check-violated-constraints-end} and \cref{line:cgilao:fixViolatedConstrs:check-violated-constraints}):
suppose \(\V(\s)\) is assigned \(\min_{\ac \in \partactions(\s)} \Qsa\), then there are three cases:

\begin{enumerate}
\item{\bld{\(\V(\s)\) stays the same.}}
No new constraint violations.

\item{\bld{\(V(\s)\) increases.}}
The constraints \(\V(\s) \leq \Q(\s, \ac')\) may be violated for \((\s, \ac') \in \successors(\s, \ssp, \partssp) \definedas \{(\s, \ac) : \ac \in \A(\s) \setminus \partactions(\s)\}\).
Note that if \((\s, \ac')\) is already inside \partssp, i.e., \(\ac' \in \partactions(\s)\), its constraint can not be violated since \(\V(\s) \gets \min_{\ac \in \partactions(\s)} \Qsa \leq \Q(\s, \ac')\).

\item{\bld{\(V(\s)\) decreases.}}
The only constraints that may be violated are \(\V(\s') \leq \Q(\s', \ac')\) for \((\s', \ac') \in \predecessors(\s, \ssp) \definedas \{(\s', \ac') : \s \in \supp(\s', \ac'), \ac' \in \A(\s')\}\).
\end{enumerate}

We store potential violations in \colsToCheck, i.e., if \(\V(\s)\) increases, the elements of \(\successors(\s, \ssp, \partssp)\) are added to \colsToCheck (\cref{alg:cg-ilao}~\cref{line:cgilao:v-increases}); and if \(\V(\s)\) decreases, the elements of \(\predecessors(\s, \ssp)\) are added to \colsToCheck (\cref{alg:cg-ilao}~\cref{line:cgilao:v-decreases}).
Elements are removed from \colsToCheck after they are checked.
As we prove later in this section, checking constraints in \colsToCheck is sufficient to find any constraint violations.

\cgilao fixes a violated constraint \((\s, \ac)\) by setting \(\V(\s) \gets \Qsa\) (\cref{alg:cg-ilao} \cref{line:cgilao:fixViolatedConstrs:fix-violation}).
This change in \V may create a new violation in another state, so we must track such potential violations in the same way as before.
This ensures that all constraint violations are tracked and fixed eventually before termination.

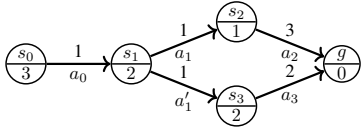
\begin{figure}[t]
\centering
\scalebox{0.7}{
\begin{tikzpicture}
\begin{scope}[every node/.style={circle split, thick, draw, minimum size=0.6cm, inner sep=0.04cm}]
        \node (0) at (0, 0) {\(\sZ\) \nodepart{lower} \(3\)};
  \node (1) at (2, 0) {\(\s_1\) \nodepart{lower} \(2\)};
        \node (2) at (4, 0.8) {\(\s_2\) \nodepart{lower} \(1\)};
        \node (3) at (4, -0.8) {\(\s_3\) \nodepart{lower} \(2\)};
        \node (g) at (6, 0) {\(g\) \nodepart{lower} \(0\)};
\end{scope}
\begin{scope}[every edge/.style={draw=black, very thick}]
        \path [->] (0) edge[] node[above] {1} node[below] {\(\ac_0\)} (1);
        \path [->] (1) edge[] node[above] {1} node[below] {\(\ac_1\)} (2);
        \path [->] (1) edge[] node[above] {1} node[below] {\(\ac'_1\)} (3);
        \path [->] (2) edge[] node[above] {3} node[below] {\(\ac_2\)} (g);
        \path [->] (3) edge[] node[above] {2} node[below] {\(\ac_3\)} (g);
\end{scope}
\end{tikzpicture}}
\caption{An SSP where \cgilao's value function is not monotonically non-decreasing.}
\label{fig:v-decrease-example}
\end{figure}

Note that $V(s)$ may decrease after an update (case 3 of the constraint violations) in \cgilao even if the heuristic used is monotonic.
This is a departure from all other algorithms based on Bellman backups where \V is guaranteed to be monotonically non-decreasing during their executions when initialised with a monotonic heuristic.
To illustrate a scenario where \V decreases in \cgilao, consider the SSP in \cref{fig:v-decrease-example} where \h is monotonic and represented inside nodes.
The first iterations of \cgilao applied to this SSP are:

\begin{enumerate}[label=Iter. \arabic*, leftmargin=*]

\item expands \sZ.
\item partly expands \(\s_1\) with \(\ac_1\).
\item expands \(\s_2\) with \(\ac_2\) and, after \CGimprovePolicy{}, we have \(\V(\s_2) = 3\), \(\V(\s_1) = 4\), \(\V(\s_0) = 5\) and \(\colsToCheck = \{(\s_1, \ac'_1)\}\). Since \(\colsToCheck \neq \emptyset\), \fixViolatedConstrs{} verifies that \((\s_1, \ac'_1)\) is currently better than the existing action \(\ac_1\) for \(\s_1\), so \(\ac'_1\) is added to \(\partactions(\s_1)\) and \(\V(\s_1)\) is changed from 4 to 3.
Recall that \(\V(\s_0) = 5\), so when \(\V(\s_1)\) is updated to \(3\), \(\{\s_0, \ac_0\}\) is inserted into \colsToCheck, and no further changes are made in this iteration.
\item expands \(\s_3\) and \CGimprovePolicy{} reduces \(\V(\sZ)\) from 5 to 4, so \V has been decreased by \CGimprovePolicy{}.

\end{enumerate}

\cgilao generalises \ilao by using a more precise separation oracle that only adds violated constraints, which translates to \cgilao ignoring inactive actions.
For a state \s, any action that has been ignored and left out of the partial SSP \partssp is not considered by a Bellman backup of \s in \partssp, so such actions' \qvalues are not computed.
However, \cgilao needs to compute additional \qvalues in its separation oracle to check violations.
As our experiments in \cref{sec:experiments} show, the \qvalues saved by ignoring inactive actions outweigh the additional \qvalues in the separation oracle, which lets \cgilao outperform \ilao.

To close the section, we prove that \cgilao: (i) terminates (\cref{thm:cgilao-terminates}); (ii) tracks all constraint violations (\cref{lem:violations-are-in-gamma}); and (iii) returns an \econsistent value function (\cref{thm:cgilao-econsistent}).
Thus, \cgilao is optimal for SSPs.

\begin{theorem}\label{thm:cgilao-terminates}
\cgilao terminates.
\end{theorem}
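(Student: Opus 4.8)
The plan is to split a run of \cgilao into a \emph{growth phase}, during which the partial SSP \partssp keeps acquiring states or actions, and a \emph{stable phase}, during which \partssp is constant. First I would observe that neither \partiallyExpandFringes{} nor \fixViolatedConstrs{} ever deletes anything from \partssp: both only take unions into \partstates and \partactions (in \partiallyExpandFringes{} via \addStateActions, and in \fixViolatedConstrs{} in the branch guarded by \(\ac \notin \partactions(\s)\)). Since \(\partstates \subseteq \Ss\) and \(\partactions(\s) \subseteq \A(\s)\) for every \s, and both \Ss and \A are finite, \partssp can be enlarged only finitely often. Hence there is an iteration after which \partssp never changes again; from now on I call this constant object the stabilised partial SSP, still written \partssp.

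Next I would check that every subroutine call returns. \partiallyExpandFringes{} and \fixViolatedConstrs{} each iterate once over a finite set (\envelope and \colsToCheck, respectively) with constant work per element. Within a single \CGimprovePolicy{} call \partssp is fixed, so its inner \textbf{repeat} loop performs full sweeps of Bellman backups over the fixed set \(\envelope \setminus \partgoals\); because the Bellman operator of a fixed (partial) SSP is a contraction in a weighted sup-norm, these sweeps drive \residual towards \(0\), so the loop exits once \(\residual \le \epsilon\) (or earlier, once \(\currp \neq \oldp\) or \(\fringe \neq \emptyset\)). The only remaining danger is that the \emph{outer} loop never meets its stopping test on \cref{line:cgilao:whileCondition}.

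The heart of the proof is therefore the behaviour of the outer loop once \partssp has stabilised. Here I would show that \V converges to \(\V^*\), the unique fixed point of the Bellman operator of the stabilised \partssp (this operator is a contraction because the partial SSP is a genuine SSP with terminal costs satisfying the standing assumptions). After stabilisation both value updates act on this fixed problem: \CGimprovePolicy{} performs full backups \(\V(\s) \gets \min_{\ac \in \partactions(\s)} \Qsa\) over every state reachable by \currp, while \fixViolatedConstrs{} only ever replaces \(\V(\s)\) by \Qsa for an action \ac already in \partactions(\s), since no action is added after stabilisation. A full backup strictly contracts the sup-norm distance from \V to \(\V^*\), and one checks that a \fixViolatedConstrs{} update is non-expansive for this distance: it decreases \(\V(\s)\) but never past \(\V^*(\s)\) by more than the current error. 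Since every \currp-reachable state receives a full backup in each outer iteration, this is exactly asynchronous value iteration with interleaved non-expansive corrections, so \(\V \to \V^*\). Once \V is within \(\epsilon\) of \(\V^*\) everywhere on the envelope, \CGimprovePolicy{} returns with \(\residual \le \epsilon\); no constraint is then violated by more than \(\epsilon\), so \fixViolatedConstrs{} leaves \V unchanged and empties \colsToCheck, the greedy policy stops changing (\(\oldp = \currp\)), and \(\fringe = \emptyset\); thus the test on \cref{line:cgilao:whileCondition} succeeds and \cgilao halts.

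The main obstacle is precisely this convergence claim, because, unlike \ilao and every other backup-based algorithm, \V is not monotone here: \fixViolatedConstrs{} can \emph{decrease} \(\V(\s)\), and to a \qvalue that need not be the minimising one, as \cref{fig:v-decrease-example} illustrates. Consequently I cannot invoke the usual ``monotone and bounded, hence convergent'' argument. My intended remedy is the contraction/non-expansiveness dichotomy sketched above, which reduces the stable phase to a standard asynchronous value-iteration convergence statement; the subsidiary facts I would need to nail down are that (a) every state reachable by \currp really is backed up in each outer iteration, so the contracting updates occur infinitely often on the relevant states, and (b) \colsToCheck is eventually exhausted, which follows because once \V stops changing no new predecessors or successors are inserted into it and its finitely many remaining entries are discharged as non-violations.
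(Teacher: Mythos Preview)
Your proof follows the same two-stage skeleton as the paper: first the partial SSP \partssp stabilises because \Ss and \A are finite and nothing is ever removed; then \V converges on the stabilised problem, forcing the outer loop's test to pass. The paper compresses the second stage into a three-sentence contradiction argument: it lets $X$ be the set of states updated infinitely often, asserts that ``applying Bellman backups infinitely often to all $X$ solves this new partial SSP with VI,'' and derives a contradiction from the residual eventually dropping below $\epsilon$. In particular, the paper does \emph{not} distinguish \CGimprovePolicy{}'s full backups from \fixViolatedConstrs{}'s single-action assignments $\V(\s) \gets \Qsa$; it simply calls both ``Bellman backups'' and leans on VI convergence as a black box.

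Your contraction/non-expansiveness decomposition is a genuine refinement of exactly this point. You correctly flag that non-monotonicity (illustrated by the paper's own \cref{fig:v-decrease-example}) blocks the usual monotone-and-bounded argument, and your observation that a \fixViolatedConstrs{} update is non-expansive---it only fires when it strictly decreases $\V(\s)$, and the new value $\Qsa$ lies between $\V^*(\s)-\delta$ and the old $\V(\s)$---supplies the technical content the paper glosses over. The trade-off is that your direct argument must track which states receive contracting updates infinitely often as the envelope \envelope shifts between iterations (your point~(a)), whereas the paper's device of defining $X$ as the infinitely-updated set finesses that bookkeeping, at the cost of being vaguer about why ``VI on $X$'' actually converges when some of those updates are not full Bellman backups. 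Both arguments are at the same level of rigour; yours is more explicit about where the difficulty lies.
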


\begin{proof}
For contradiction, suppose \cgilao does not terminate.
\ssp is finite and we do not add duplicate states nor constraints, so eventually \partssp is fixed
and \(\fringe = \emptyset\).
Then there must be a finite set of states \(X \subseteq \partstates\) that are updated with Bellman
backups infinitely often by \CGimprovePolicy{} and/or \fixViolatedConstrs{}.
But \(X\) induces a new partial SSP, and applying Bellman backups infinitely often to all \(X\) solves this new partial SSP with VI, so \V must converge to a fixed point and the residual will be less than \(\epsilon\) in finite time.
Thus, \V will not be updated, and all remaining termination conditions will be satisfied, giving us the desired contradiction.
\end{proof}

\begin{lemma}\label{lem:violations-are-in-gamma}
If there is \(\s \in \partstates \setminus \partgoals\) and \(\ac \in \A(\s)\) such that \(\V(\s) > \Qsa + \epsilon\), then \((\s, \ac) \in \colsToCheck\).
\end{lemma}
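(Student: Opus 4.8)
The plan is to read the statement as a loop invariant of \cgilao{}: I will prove that it holds whenever control reaches the termination test of the outer \texttt{repeat}. Note that \colsToCheck{} only accumulates new pairs during \CGimprovePolicy{}, and is pruned only inside \fixViolatedConstrs{}, where a pair is dropped only after it has been tested for violation. Consequently it suffices to show that whenever an inequality $\V(\s) > \Qsa + \epsilon$ first becomes true for some $\s \in \partstates \setminus \partgoals$ and $\ac \in \A(\s)$, the update responsible for it simultaneously inserts $(\s,\ac)$ into \colsToCheck{}. I would argue this by induction on the sequence of modifications to \V{} and to \partssp{}, inspecting the last assignment that turns $\V(\s) \le \Qsa + \epsilon$ into $\V(\s) > \Qsa + \epsilon$.

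The core observation is that the only quantities entering $\V(\s) > \Qsa + \epsilon$ are $\V(\s)$ on the left and the successor values $\{\V(\sP) : \sP \in \supp(\s,\ac)\}$, which appear in \Qsa{} on the right; moreover \V{} is altered only at three sites, the greedy expansion in \partiallyExpandFringes{}, the backup $\V(\s) \gets \min_{\ac' \in \partactions(\s)} Q(\s,\ac')$ in \CGimprovePolicy{}, and the fix $\V(\s) \gets \Qsa$ in \fixViolatedConstrs{}. Hence a fresh violation of $(\s,\ac)$ can be created in only two ways: an update raises $\V(\s)$, or it lowers $\V(\sP)$ for some $\sP \in \supp(\s,\ac)$. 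I would match these to the two non-trivial oracle rules: a rise in $\V(\s)$ adds $\successors(\s,\ssp,\partssp) = \{(\s,\ac') : \ac' \in \A(\s)\setminus\partactions(\s)\}$ to \colsToCheck{} (Case~2), and a drop in $\V(\sP)$ adds $\predecessors(\sP,\ssp) = \{(\s'',\ac'') : \sP \in \supp(\s'',\ac''), \ac'' \in \A(\s'')\}$, which contains $(\s,\ac)$ because $\sP \in \supp(\s,\ac)$ (Case~3). The complementary subcase $\ac \in \partactions(\s)$ after a backup is vacuous: there $\V(\s) \gets \min_{\ac' \in \partactions(\s)} Q(\s,\ac') \le \Qsa$, so an action already inside \partssp{} is never left violated; I would record this as a standalone observation that lets me restrict Case~2 to $\ac \notin \partactions(\s)$.

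The two remaining sites fit the same template. In \fixViolatedConstrs{}, the assignment $\V(\s) \gets \Qsa$ is a decrease of $\V(\s)$, so it deposits $\predecessors(\s,\ssp)$ into the returned \colsToCheck{}, capturing every predecessor whose $Q$-value this decrease could lower; the fix also leaves $\V(\s) = \Qsa$, so $(\s,\ac)$ itself is no longer violated. For \partiallyExpandFringes{}, the de-goaled state $s_f$ receives only its greedy actions and is then backed up inside \CGimprovePolicy{}, which restores $\V(s_f) = \min_{\ac' \in \partactions(s_f)} Q(s_f,\ac')$; any transient violation introduced when $s_f$ leaves \partgoals{} is therefore absorbed by the backup case. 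I would also note that merely enlarging $\partactions$, $\partstates$, or $\partgoals$ changes no $Q$-value and so cannot by itself create a violation, the only structural change that matters being a state entering $\partstates \setminus \partgoals$. Stating the invariant only at the \texttt{repeat} checkpoints, rather than after every line, keeps these transients out of the bookkeeping.

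The step I expect to be the real obstacle is reconciling this directional reasoning with the $\epsilon$ slack in the oracle's guards, $Q_{\min} - \V(\s) > \epsilon$ and $\V(\s) - Q_{\min} > \epsilon$: I must rule out a run of individually sub-$\epsilon$ updates that together drive $\V(\s) - \Qsa$ past $\epsilon$ while no single update trips a guard and inserts $(\s,\ac)$. Closing this gap will rely on the persistence of \colsToCheck{} entries until they are explicitly checked, together with the fact that \CGimprovePolicy{} keeps re-sweeping until its residual falls to $\epsilon$, to argue that any such pair is already present in \colsToCheck{} from the first update that carried it across the threshold. Making this accounting watertight, rather than the directional case analysis, which is routine, is where the genuine difficulty lies.
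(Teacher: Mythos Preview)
Your inductive scheme---induct on the number of updates to \V, split on whether the update raises or lowers \(\V(\s)\), and match the two non-trivial cases to \successors{} and \predecessors{}---is exactly the paper's argument. The paper's proof is three sentences: base case \(\partstates\setminus\partgoals=\emptyset\) (vacuous), then ``any violations must have been introduced in the latest update to \V{} by \CGimprovePolicy{} or \fixViolatedConstrs{}, but we add any potential violations to \colsToCheck{}'' at the relevant lines. It does not unpack the directional case analysis you lay out, nor does it mention the \(\epsilon\)-guards at all.

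The obstacle you isolate---that the oracle only fires when the change exceeds \(\epsilon\), so a sequence of sub-\(\epsilon\) moves could in principle carry \(\V(\s)-\Qsa\) across the threshold without ever inserting \((\s,\ac)\)---is a genuine gap, and the paper's proof glosses over it rather than closing it. Your sketch of a fix (persistence of \colsToCheck{} plus re-sweeping in \CGimprovePolicy{}) is not obviously sufficient: re-sweeping drives the residual over \(\partactions\) below \(\epsilon\), but the violating action may lie in \(\A(\s)\setminus\partactions(\s)\), and nothing forces that pair into \colsToCheck{} if no single step was large. So you have reproduced the paper's intended argument and gone further by naming its weak point; if you want airtightness you will either need a sharper accounting for accumulated small changes, or to note that a mildly weakened invariant (e.g.\ up to an \(O(\epsilon)\) slack) still suffices for the use in \cref{thm:cgilao-econsistent}.
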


\begin{proof}
We prove by induction over \(n\), the number of updates to \V.
In the base case, \(n = 0\), \partssp is the initial partial SSP with \(\partstates \setminus \partgoals =
\emptyset\), so the claim is vacuously true.
Now, we show the claim holds after \(n+1\) updates to \V, assuming that the claim holds
for \(n\) updates.
Any violations must have been introduced in the latest update to \V by \CGimprovePolicy{} or
\fixViolatedConstrs{}, but we add any potential violations to \colsToCheck in \cref{alg:cg-ilao}
\crefrange{line:cgilao:improvePolicy:check-violated-constraints-begin}{line:cgilao:improvePolicy:check-violated-constraints-end}
and \cref{line:cgilao:fixViolatedConstrs:check-violated-constraints} respectively.
\end{proof}

\begin{theorem}\label{thm:cgilao-econsistent}
\cgilao outputs an \econsistent \V.
\end{theorem}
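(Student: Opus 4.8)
The plan is to turn the three stopping conditions of \cgilao's outer loop---\(\fringe = \emptyset\), \(\oldp = \currp\) and \(\residual \le \epsilon\)---into a bound on the \emph{full-action} residual \(\residual(\s) = |\V(\s) - \min_{\ac \in \A(\s)} \Qsa|\) for every \(\s \in \Ss^{\p_{\V}}\), exactly as \cref{def:epsilon-consistency} demands. The delicate point throughout is that \cgilao backs up only over the partial action sets \(\partactions(\s) \subseteq \A(\s)\), so internally it controls only the \emph{partial}-action residual; the proof must upgrade this to a statement about all of \(\A(\s)\). I would therefore establish the two inequalities \(\V(\s) - \min_{\ac \in \A(\s)} \Qsa \le \epsilon\) and \(\min_{\ac \in \A(\s)} \Qsa - \V(\s) \le \epsilon\) separately, and then argue that the greedy envelope \(\Ss^{\p_{\V}}\) sits inside the states for which both hold.

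For the first inequality I would first show that \(\colsToCheck = \emptyset\) when the loop exits. If \fixViolatedConstrs{} had repaired any constraint on its final call, then by the guard of its \textbf{for} loop it would have recorded \(\V(\s) - \Qsa > \epsilon\) on \cref{line:cgilao:fixViolatedConstrs:track-residual}, contradicting the exit condition \(\residual \le \epsilon\); hence no repair occurred, its accumulator stayed empty, and the returned \colsToCheck is empty. Feeding \(\colsToCheck = \emptyset\) into the contrapositive of \cref{lem:violations-are-in-gamma} gives, for every \(\s \in \partstates \setminus \partgoals\) and every \(\ac \in \A(\s)\), the bound \(\V(\s) \le \Qsa + \epsilon\); minimising over \(\ac\) yields \(\V(\s) - \min_{\ac \in \A(\s)} \Qsa \le \epsilon\). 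This is precisely where ignoring inactive actions is shown to be harmless.

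For the second inequality I would use that the loop also exited with the partial-action residual \(\le \epsilon\), which \CGimprovePolicy{} maintains over \(\envelope \setminus \partgoals\); this gives \(\min_{\ac \in \partactions(\s)} \Qsa - \V(\s) \le \epsilon\). Since \(\partactions(\s) \subseteq \A(\s)\) we have \(\min_{\ac \in \A(\s)} \Qsa \le \min_{\ac \in \partactions(\s)} \Qsa\), so the same bound holds with the full minimum and hence \(\residual(\s) \le \epsilon\) on \(\envelope \setminus \partgoals = \Ss^{\currp} \setminus \partgoals\). Combining with the first inequality, the full-action residual is \(\le \epsilon\) at every state of \(\Ss^{\currp}\) (goals contribute residual \(0\)), and \(\fringe = \emptyset\), i.e.\ \(\Ss^{\currp} \cap (\partgoals \setminus \Sg) = \emptyset\), guarantees that \(\Ss^{\currp}\) never rests on an artificial goal.

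The main obstacle is the final coverage step: \cref{def:epsilon-consistency} asks for the bound on \(\Ss^{\p_{\V}}\), the envelope of the \emph{full} greedy policy \(\p_{\V}(\s) = \argmin_{\ac \in \A(\s)} \Qsa\), whereas everything above is phrased through \currp, the greedy policy over \partactions. I expect the crux to be proving \(\Ss^{\p_{\V}} \subseteq \Ss^{\currp}\) (equivalently, that \(\p_{\V}\) follows \currp along the envelope), so that \(\p_{\V}\) cannot slip through an ignored action into an unexpanded state carrying no residual guarantee. The no-violation property just derived forces \(\Qsa \ge \V(\s) - \epsilon = Q(\s, \currp(\s)) - \epsilon\) for every \(\ac \in \A(\s)\), so \currp is within \(\epsilon\) of the full minimum at each envelope state; turning this near-optimality into exact agreement of the two induced envelopes---leaning on \(\oldp = \currp\), on the fact that \partiallyExpandFringes{} seeds each expanded state with its full-greedy action, and on the fixed tie-breaking rule---is the step I would treat most carefully.
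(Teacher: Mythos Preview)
Your route differs from the paper's. The paper argues by contradiction: assuming some $\s\in\Ss^{\currp}$ has $\residual(\s)>\epsilon$, it notes that \CGimprovePolicy{} must have driven the partial-action residual below $\epsilon$ (because $\fringe=\emptyset$ and $\oldp=\currp$ force its inner loop to run to convergence), and then rules out \fixViolatedConstrs{} as a source of inconsistency since any repair it performs is recorded in \residual on \cref{line:cgilao:fixViolatedConstrs:track-residual}. Your two-inequality decomposition is more explicit, and your use of \cref{lem:violations-are-in-gamma}---first arguing $\colsToCheck=\emptyset$ at exit, then taking the contrapositive to obtain $\V(\s)\le\Qsa+\epsilon$ for every $\s\in\partstates\setminus\partgoals$ and $\ac\in\A(\s)$---is more direct than the paper's, which instead reasons that a hypothetical violation in \colsToCheck would have been caught on the last call to \fixViolatedConstrs{} and forced $\residual>\epsilon$.

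The paper's proof contains a second part yours omits: after asserting \econsistency, it separately shows that states $\s$ outside the envelope with $\V(\s)>\V^*(\s)$ still satisfy $\V(\s)\le\Qsa+\epsilon$, so further backups on them would not change \V. This is not the theorem statement per se, but it is the piece that justifies calling \cgilao \emph{optimal} (since \cgilao lacks the $\V\le\V^*$ invariant that standard heuristic-search optimality arguments lean on). Your first inequality already establishes exactly this bound across all of $\partstates\setminus\partgoals$, so the content is implicit in what you have; you would only need to add the interpretation that inadmissible states cannot hide a cheaper policy.

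On the obstacle you flag---\cref{def:epsilon-consistency} is stated for $\Ss^{\p_{\V}}$ while everything you derive (and everything the paper derives) is over $\Ss^{\currp}$---you are correct that this is the delicate step. The paper does not close this gap either: it works with \currp throughout and then simply writes ``This proves \econsistency.'' So you are not missing an idea the paper supplies; you have made explicit a subtlety the paper elides. Your sketch for attacking it (near-optimality of \currp via the no-violation bound, plus the seeding in \partiallyExpandFringes{} and the fixed tie-breaking rule) is a reasonable starting point, but be aware the paper offers no guidance here.
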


\begin{proof}
For contradiction, suppose \cgilao has terminated and outputs \V with \(\s \in \Ss^{\currp}\) such that \(\residual(\s) > \epsilon\).
By \cgilao's termination condition (\cref{alg:cg-ilao} \cref{line:cgilao:whileCondition}) we know that \(\currp = \oldp\) and \(\fringe = \emptyset\), so \CGimprovePolicy{} applies Bellman backups to all states in the envelope until \(\residual \leq \epsilon\) (\cref{alg:cg-ilao} \cref{line:cgilao:improvePolicy:repeatCondition}).
Therefore, the inconsistency of \s must be introduced by \fixViolatedConstrs{}, either directly by updating \V, or indirectly by forcing a policy change.
But the residual is tracked (\cref{alg:cg-ilao} \cref{line:cgilao:fixViolatedConstrs:track-residual}) and policy changes are flagged when \(\currp \neq \oldp\), which are both checked in the termination condition.
So, \fixViolatedConstrs{} can not introduce any inconsistency either.
But these two methods are the only ones affecting \V, which yields the desired contradiction.
This proves \econsistency (\cref{def:epsilon-consistency}), but previous heuristic search methods rely on the invariant \(\V \leq \V^*\) to safely prune states that can not be part of an optimal policy's envelope, which \cgilao does not have.
We must ensure that states \s outside the policy envelope with \(\V(\s) > \V^*(\s)\) can not lead to a cheaper policy if we apply more Bellman backups to them.
Consider such \s outside the greedy policy envelope with \(\V(\s) > \V^*(\s)\), and for contradiction let \(\V(\s) > \Qsa + \epsilon\) for some \(\ac \in \A(\s)\).
Since states in \partgoals are initialised with an admissible \h, we know that \(\s \in \partstates \setminus \partgoals\), so \((\s, \ac) \in \colsToCheck\) by \cref{lem:violations-are-in-gamma}.
Since \fixViolatedConstrs{} overwrites \colsToCheck, \((\s, \ac)\) must have been added in the previous call, but then \(\residual \gets \max(\V(\s) - \Qsa, \residual) > \epsilon\) (\cref{alg:cg-ilao} \cref{line:cgilao:fixViolatedConstrs:track-residual}), so the termination criteria (\cref{alg:cg-ilao} \cref{line:cgilao:whileCondition}) are not satisfied, giving us a contradiction.
Therefore, all inadmissible states \s satisfy \(V(\s) \leq \Qsa + \epsilon\).
Thus, if \cgilao terminates with \(\residual \leq \epsilon\), additional backups to states with \(\V(\s) > \V^*(\s)\) would not change \V, so we can conclude that \cgilao outputs \econsistent \V.
\end{proof}

\vspace{1mm}
\section{Experiments}\label{sec:experiments}

In this section we empirically compare \cgilao to two state-of-the-art optimal heuristic search planners: \ilao~\cite{Hansen2001:ilao} and \lrtdp~\cite{Bonet2003:lrtdp}.
We also compared \cgilao against \ftvi~\cite{Dai2009:ftvi}, the only algorithm we are aware of that uses action elimination to prune actions, but it is uncompetitive and FTVI's results are reported in \cref{sec:appendix}.
We consider the following admissible heuristics: h-max (\hMax); lm-cut (\hLMcut)~\cite{Helmert2009:lmcut}; and h-roc (\hROC)~\cite{trevizan17:hpom}.
As in~\cite{trevizan17:hpom}, we use \hROC with \hMax as a dead-end detection mechanism for problems with dead ends.
We use $\epsilon = 0.0001$ and convert SSPs into dead-end free SSPs~\cite{trevizan17:mcmp} with a penalty of $D = 500$ for all domains except \emph{Parc Printer variants} where $D = 10^7$ due to the large cost of single actions.

On all problems, we collected 50 runs with different random seeds for each combination of planner and heuristic.
We refer to a problem paired with a fixed seed as an instance.
All runs have a cutoff of 30 minutes of CPU time and 8GB of memory.
The experiments were conducted in a cluster of Intel Xeon 3.2 GHz CPUs and each run used a single CPU core.
The LP solver used for computing \hROC was CPLEX version 20.1.
We consider the following domains:

\newcommand\triangleScale{0.6}
\newcommand\labelTightness{0.1cm}
\begin{figure}
\centering
\scalebox{\triangleScale}{
\begin{tikzpicture}
\Vertex[x=0.350,y=0.350,color={230,230,230},label=1-1,shape=rectangle,RGB]{l-1-1}
\Vertex[x=4.150,y=0.350,color={230,230,230},label=1-3,shape=rectangle,RGB]{l-1-3}
\Vertex[x=2.250,y=0.350,color={230,230,230},label=1-2,shape=rectangle,RGB]{l-1-2}
\Vertex[x=1.300,y=1.500,color={230,230,230},label=2-1,shape=circle,RGB]{l-2-1}
\Vertex[x=3.200,y=1.500,color={230,230,230},label=2-2,shape=circle,RGB]{l-2-2}
\Vertex[x=2.250,y=2.650,color={230,230,230},label=3-1,shape=circle,RGB]{l-3-1}
\node [left = \labelTightness of l-1-1]{A};
\node [right = \labelTightness of l-3-1]{B};
\node [right = \labelTightness of l-1-3]{C};
\Edge[,Direct](l-1-1)(l-1-2)
\Edge[,Direct](l-1-1)(l-2-1)
\Edge[,Direct](l-1-2)(l-1-3)
\Edge[,Direct](l-1-2)(l-2-2)
\Edge[,Direct](l-2-1)(l-1-2)
\Edge[,Direct](l-2-1)(l-3-1)
\Edge[,Direct](l-2-2)(l-1-3)
\Edge[,Direct](l-3-1)(l-2-2)
\end{tikzpicture}
\begin{tikzpicture}
\Vertex[x=0.350,y=0.350,color={230,230,230},label=1-1,shape=rectangle,RGB]{l-1-1}
\Vertex[x=7.150,y=0.350,color={230,230,230},label=1-5,shape=rectangle,RGB]{l-1-5}
\Vertex[x=2.050,y=0.350,color={230,230,230},label=1-2,shape=rectangle,RGB]{l-1-2}
\Vertex[x=3.750,y=0.350,color={230,230,230},label=1-3,shape=rectangle,RGB]{l-1-3}
\Vertex[x=5.450,y=0.350,color={230,230,230},label=1-4,shape=rectangle,RGB]{l-1-4}
\Vertex[x=1.200,y=1.425,color={230,230,230},label=2-1,shape=circle,RGB]{l-2-1}
\Vertex[x=2.900,y=1.425,color={230,230,230},label=2-2,shape=circle,RGB]{l-2-2}
\Vertex[x=4.600,y=1.425,color={230,230,230},label=2-3,shape=circle,RGB]{l-2-3}
\Vertex[x=6.300,y=1.425,color={230,230,230},label=2-4,shape=circle,RGB]{l-2-4}
\Vertex[x=2.050,y=2.500,color={230,230,230},label=3-1,shape=circle,RGB]{l-3-1}
\Vertex[x=5.450,y=2.500,color={230,230,230},label=3-3,shape=circle,RGB]{l-3-3}
\Vertex[x=3.750,y=2.500,color={230,230,230},label=3-2,shape=rectangle,RGB]{l-3-2}
\Vertex[x=2.900,y=3.575,color={230,230,230},label=4-1,shape=circle,RGB]{l-4-1}
\Vertex[x=4.600,y=3.575,color={230,230,230},label=4-2,shape=circle,RGB]{l-4-2}
\Vertex[x=3.750,y=4.650,color={230,230,230},label=5-1,shape=circle,RGB]{l-5-1}
\node [left = \labelTightness of l-1-1]{A};
\node [right = \labelTightness of l-5-1]{B};
\node [right = \labelTightness of l-1-5]{C};
\Edge[,Direct](l-1-1)(l-1-2)
\Edge[,Direct](l-1-1)(l-2-1)
\Edge[,Direct](l-1-2)(l-1-3)
\Edge[,Direct](l-1-2)(l-2-2)
\Edge[,Direct](l-1-3)(l-1-4)
\Edge[,Direct](l-1-3)(l-2-3)
\Edge[,Direct](l-1-4)(l-1-5)
\Edge[,Direct](l-1-4)(l-2-4)
\Edge[,Direct](l-2-1)(l-1-2)
\Edge[,Direct](l-2-1)(l-3-1)
\Edge[,Direct](l-2-2)(l-1-3)
\Edge[,Direct](l-2-3)(l-1-4)
\Edge[,Direct](l-2-3)(l-3-3)
\Edge[,Direct](l-2-4)(l-1-5)
\Edge[,Direct](l-3-1)(l-3-2)
\Edge[,Direct](l-3-1)(l-2-2)
\Edge[,Direct](l-3-1)(l-4-1)
\Edge[,Direct](l-3-3)(l-2-4)
\Edge[,Direct](l-3-2)(l-3-3)
\Edge[,Direct](l-3-2)(l-4-2)
\Edge[,Direct](l-4-1)(l-3-2)
\Edge[,Direct](l-4-1)(l-5-1)
\Edge[,Direct](l-4-2)(l-3-3)
\Edge[,Direct](l-5-1)(l-4-2)
\end{tikzpicture}
}
\caption{Triangle Tire World problems 1 (left) and 2 (right).}
\label{fig:tireworld}
\end{figure}
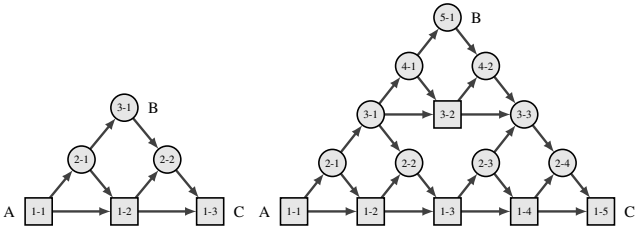

\begin{figure*}[ht!]
\includegraphics[scale=0.502, valign=t]{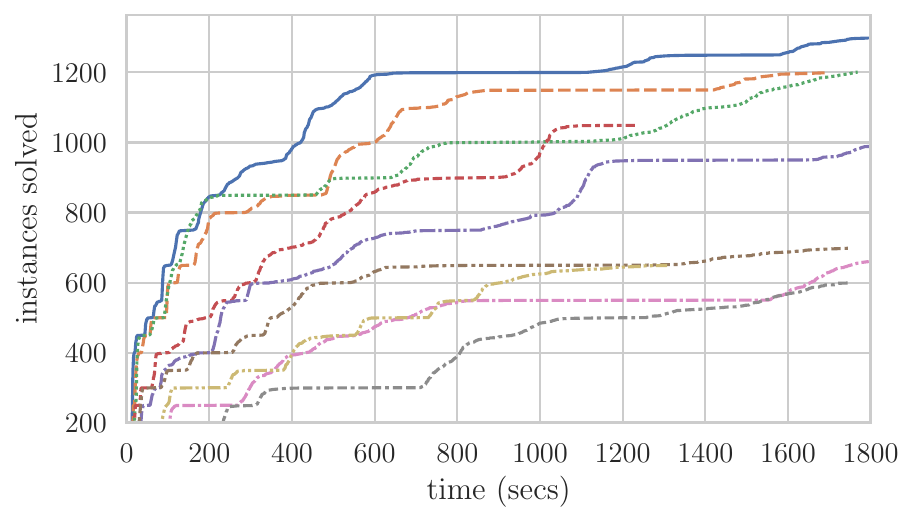}
\includegraphics[scale=0.502, valign=t]{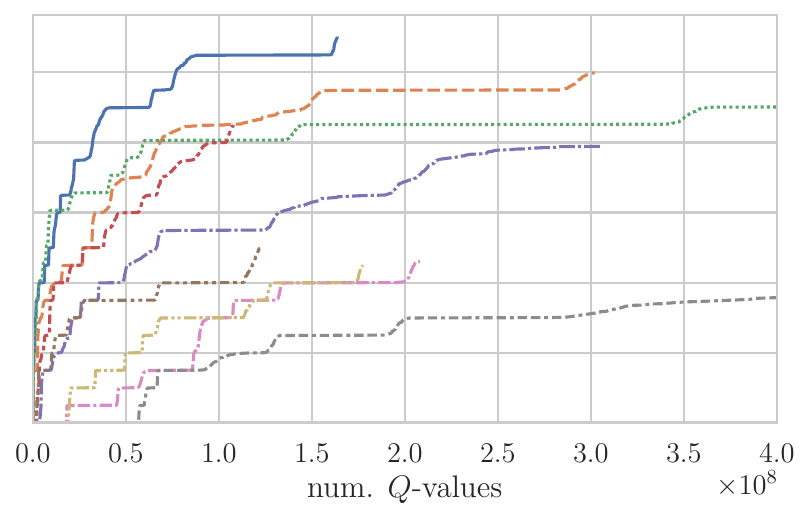}
\includegraphics[scale=0.502, valign=t]{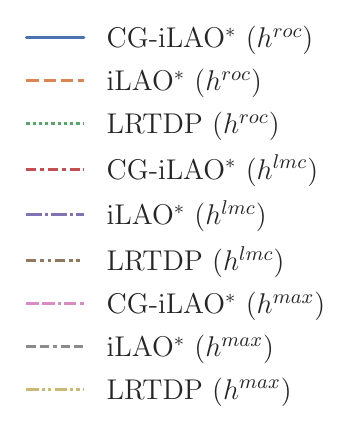}
\caption{For each algorithm and heuristic, the cumulative plot of how many instances were solved w.r.t. time in seconds (left) and number of \qvalues (right).
Both plots start at 200 solved instances and (right) is cut off at \(4 \times 10^8\) \qvalues.}
\label{fig:cumulative-planner-and-heuristic}
\end{figure*}

\begin{figure*}[ht!]
\includegraphics[scale=0.42]{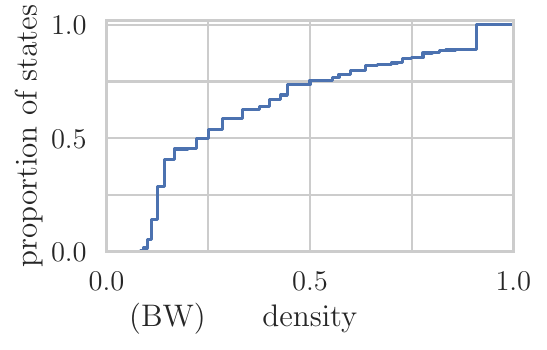}
\includegraphics[scale=0.42]{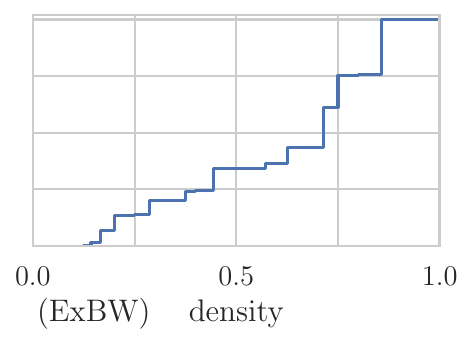}
\includegraphics[scale=0.42]{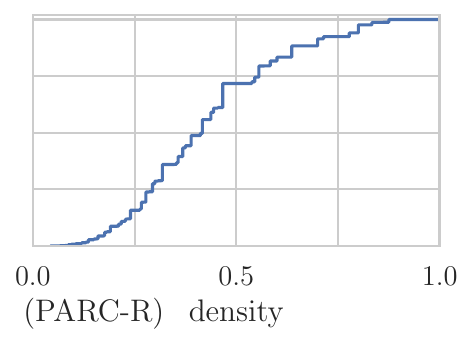}
\includegraphics[scale=0.42]{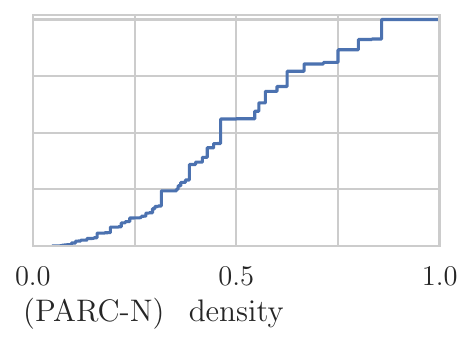}
\includegraphics[scale=0.42]{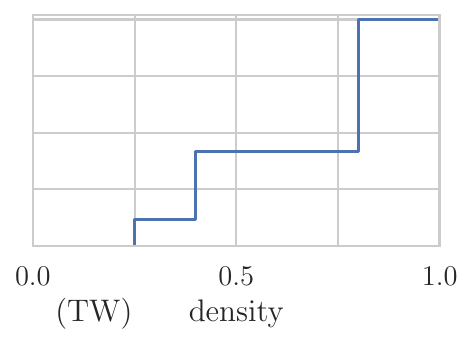}
\caption{Cumulative plot of state density, i.e., \({|\partactions(\s)|}/{|\A(\s)|}\) over 50 instances for the largest solved problem per domain.}
\label{fig:density-per-state}
\end{figure*}

\paragraphHack{Triangle Tire World with Head-start (\tireworld)}
In the original Triangle Tire World domain~\cite{Little2007:PlannerVsReplanner,Buffet2008:ippc}, the agent is provided a map of locations in a triangular layout with corners \(A, B, C\), as in \cref{fig:tireworld}.
The agent's task is to travel from corner \(A\) to \(C\), but it gets a flat tyre with probability $0.5$ every time it moves.
Once the car gets a flat tyre, it must change the tyre if a spare is available, otherwise no action is available and the goal is no longer reachable.
The agent can only store one spare at a time, and it can only obtain spare tyres in select locations (circles in \cref{fig:tireworld}).
A shortcoming of this domain is that its difficulty scales exponentially, so it may be easy to solve problem \(n\) and impractical for \(n+1\).
For this reason we extend the domain by allowing the agent a head-start, that is, its starting location may be anywhere along the edge \(AB\), for instance, on problem 2 these are locations 1-1, 2-1, \dots, 5-1 (\cref{fig:tireworld}~(right)).
Let \(\tireworld(n, d)\) denote an instance of Triangle Tire World with Head-start where \(n\) is the problem size and \(d\) denotes the distance between the agent's starting location and corner \(B\).
When \(d = 2n\), we obtain the original problem of size \(n\) and reducing \(d\) makes the problem easier until we reach \(d = 1\), the easiest variant.
Experiments using \lrtdp and \hROC suggest the following relation in terms of CPU time: \(\tireworld(n+1, 2n-3) \leq \tireworld(n, 2n) \leq \tireworld(n+1, 2n-2)\).
Therefore, for each size $n$, we consider 5 problems: $\tireworld(n, 2n-4), \dots, \tireworld(n, 2n)$.

\paragraphHack{Probabilistic Blocks World (\bw)~\cite{Buffet2008:ippc}}
As in the deterministic Blocks World from IPC, the agent is tasked with arranging blocks on a table into a particular configuration with actions to pick up blocks, put them down, or stack them.
The probabilistic version adds a 0.25 probability to each action that the handled block falls onto the table.
Furthermore, actions are added that allow the agent to pick up, put down, and stack a tower of three blocks; these have 0.9 probability of the whole tower falling onto the table.

\paragraphHack{Exploding Blocks World (\exbw)~\cite{Buffet2008:ippc}}
Another variation for the deterministic Blocks World but this time each block is rigged with an explosive that can detonate once and destroy the table or block immediately underneath it.
When a block is placed on the table or another block, it detonates with probability 0.4 and 0.1 respectively.
Once the table or a block has been destroyed, the agent can not interact with them anymore; therefore, if they are not in their goal position, the goal will be unreachable.

\paragraphHack{Probabilistic PARC Printer (\parc)~\cite{trevizan17:hpom}}
This domain is a probabilistic extension of the PARC Printer domain from IPC.
It models a modular printer consisting of various components and each page scheduled for printing needs to pass through multiple components in a particular order.
The goal is to optimise how each page is directed through the different components to satisfy the printing requirements.
With probability 0.1, a component jams ruining the relevant page and forcing it to be reprinted.
The domain comes in two flavours: with repair (\parcr), where jammed components can be repaired and then used again; and without repair (\parcn), where jammed components remain unusable.

Our code and benchmarks are available at \citet{Schmalz2024:CgilaoSourceCode}.
We now present a summary of our findings.

\paragraphHack{What is the best planner and heuristic combination?}A common metric to evaluate planners is \textit{coverage}, i.e., the total number of instances solved in a given amount of time, thus larger coverage is better.
\Cref{fig:cumulative-planner-and-heuristic} (left) shows the coverage of each combination of planner and heuristic as a function of time.
The top three combinations and their total coverages are \cgilaoH{roc} (1300), \lrtdpH{roc} (1202), and \ilaoH{roc} (1200).Note that \cgilaoH{roc} and \lrtdpH{roc} alternate in the top spot up to 220 seconds and \cgilaoH{roc} has the best coverage after that until the experiment cutoff.
Moreover, from 300 seconds onwards, \cgilaoH{roc}'s lead varies from 50 to 243 instances.
We present a breakdown of coverage per domain in \cref{tab:domain-coverage}.
For each domain considered, \cgilaoH{roc} reaches the highest coverage over other planners and heuristics.
For all three heuristics considered, \cgilao also obtains the highest coverage in all domains against the other planners for the same heuristic.
In \cref{tab:speedups}, we present the minimum and maximum speedup per domain of \cgilao over the other planners for \hROC.
The speedups w.r.t. \ilaoH{roc} vary from \(0.9\times\) (i.e., 11\% slower) in the largest \parcr problem and \(3\times\) in problem \#9 of \exbw.
Against \lrtdpH{roc}, the speedups vary from \(0.9\times\) in problem \#8 of \exbw to \(8.4\times\) in \parcn problem s4-c3.
For the performance of each planner and heuristic per problem, see \cref{sec:appendix}.

\paragraphHack{How many actions can \cgilao ignore?}
To answer this question, we look at the density of states \s, defined as \({|\partactions(\s)|}/{|\A(\s)|}\), in the final partial SSP of \cgilaoH{roc}.
\Cref{fig:density-per-state} shows, for each domain, the cumulative plot of density, i.e., how many states contain up to and including a given proportion of their applicable actions.
In all instances, at least one third of the states contains at most 50\% of the applicable actions.
The density is high for \tireworld because many states only have one applicable action.
The density is also high for \exbw because heuristics are comparatively weak for this domain.
For the other domains the results are much stronger: between 56\% and 75\% of states contain at most 50\% of the actions.
Overall, \cgilaoH{roc} added between \(38\%\) and \(66\%\) of all possible actions in its own partial SSP, and added between \(43\%\) and \(65\%\) of \ilao's actions.

\begin{table}[t!]
\setlength{\tabcolsep}{2.5pt}
\begin{center}
{\small
\begin{tabularx}{\columnwidth}{l l r r r r r|r}
\toprule
& &         \multicolumn{1}{c}{\bw} & \multicolumn{1}{c}{\exbw} & \multicolumn{1}{c}{\parcn} & \multicolumn{1}{c}{\parcr} & \multicolumn{1}{c|}{~\tireworld~} & \textbf{Total}\\
\multicolumn{2}{l}{Num. of instances} &         300 & 250 & 300 & 250 & 200 & 1300\\
\midrule
\hROC & \cgilao    & \bestCovr{300} &  \bestCovr{250} & \bestCovr{300} &  \bestCovr{250} &  \bestCovr{200} &  \bestCovr{1300} \\
      & \ilao      & \bestCovr{300} &   200           & \bestCovr{300} &  \bestCovr{250} &             150 &            1200 \\
      & \lrtdp     & 257            &  \bestCovr{250} & \bestCovr{300} &             200 &             195 &            1202 \\
\midrule
\hLMcut  & \cgilao & 150            &  \bestCovr{250} & \bestCovr{300} &             200 &             150 &            1030 \\
         & \ilao   & 150            &   200           & \bestCovr{300} &             200 &             140 &             990 \\
         & \lrtdp  &   0            &   200           & \bestCovr{300} &              50 &             149 &             699 \\
\midrule
\hMax & \cgilao    & 150            &   200           &            150 &               0 &             161 &             661 \\
      & \ilao      & 150            &   150           &            150 &               0 &             150 &             600 \\
      & \lrtdp     & 150            &   200           &            150 &               0 &             150 &             650 \\
\bottomrule
\end{tabularx}
}
\end{center}
\caption{Coverage per domain. Best coverage for each domain (column) in bold.}
\label{tab:domain-coverage}
\end{table}

\begin{table}[t!]
\setlength{\tabcolsep}{2.5pt}
\begin{center}

{\small
\setlength{\tabcolsep}{5pt}
\begin{tabularx}{\columnwidth}{c c c c c c}
\toprule
       & \bw        & \exbw      & \parcr      & \parcn     & \tireworld \\
\midrule
\ilaoH{roc}  & 1.1--1.4   & 1.0--3.0   & 0.9--1.3    & 1.3--2.2   & 1.4--1.6   \\
\lrtdpH{roc} & 1.3--2.0   & 0.9--2.9   & 2.0--8.4    & 0.7--0.9   & 1.2--1.6   \\
\bottomrule
\end{tabularx}}
\end{center}
\caption{
\cgilaoH{roc}'s speed-up over \ilaoH{roc} and \lrtdpH{roc}. The speed-ups only consider problems solved by both algorithms.}\label{tab:speedups}
\end{table}

\paragraphHack{Are \qvalues being saved?}\cgilao can save \qvalue computations by ignoring inactive actions, but at the cost of computing additional \qvalues in its separation oracle.
The cumulative plot over \qvalues in \cref{fig:cumulative-planner-and-heuristic} (right) shows that the savings in \qvalues outweigh the overhead, i.e., given a budget in \qvalues computations, \cgilao is capable of solving more instances than the other planners for the same heuristic.
At their maximum coverage, \ilaoH{roc} and \lrtdpH{roc} use \(4\times\) and \(10\times\) more \qvalues than \cgilaoH{roc}, respectively.
Moreover, \cgilaoH{roc} reaches its maximum coverage of 1300 using \(1.64 \times 10^8\) \qvalues while \ilaoH{roc} and \lrtdpH{roc} only solve 1149 and 1052 instances, respectively, for the same number of \qvalues.
A similar trend is observed when using \hLMcut; however, when using \hMax, the least informative heuristic considered, \lrtdpH{max} is slightly more \qvalue efficient than \cgilaoH{max}.

\paragraphHack{What is the impact of the heuristic on \cgilao?}
Note that, in \cref{fig:cumulative-planner-and-heuristic}, as the heuristic becomes more informative, the performance gains of \cgilao over \ilao and \lrtdp increases.
To explore this trend, we use the heuristic \(\hOptPerturbedW(\s)\) defined for \(w \in [0, 1]\) as \(\V^*(\s) \cdot r\) where \(r\) is a uniform randomly selected number from \((w , 1]\), which lets us quantify how informative a heuristic is (on average) with \(w\).
The randomness in the weight $w$ ensures that the ordering of states induced by $\hOptPerturbedW$ is different from the one induced by $V^*$.
Due to the high cost of computing \(\V^*\) we only consider the smallest problems of \bw, \exbw, \parcn, and \tireworld.
Over these problems and 50 instances each, \cref{fig:perturbed-hstar} shows the mean search time and 95\% C.I. as \(w\) varies over \(0.1, 0.2, \dots, 0.9\).
Search time excludes time spent computing the heuristic.
The ratio between \cgilao's and \ilao's runtime supports that \cgilao scales better with better heuristics: the ratio starts at 86\% and decreases to 49\% and 46\% for \(w=0.5\) and \(w=0.9\) respectively.
The reason for this behaviour is that good heuristics (i.e., tighter lower bounds) prevent \cgilao from adding inactive actions to its partial SSP, resulting in more savings in \qvalue computation.
Over all values of $w$, there is no statistically significant difference between \cgilao and \lrtdp, but both offer substantial improvement over \ilao.
This suggests \lrtdp's sampling approach can more efficiently leverage the information provided the heuristics than \ilao and \cgilao bridges the gap between them, allowing a non-sampling-based planner to use the heuristics as effectively as \lrtdp.

\begin{figure}[t!]
\includegraphics[width=\linewidth]{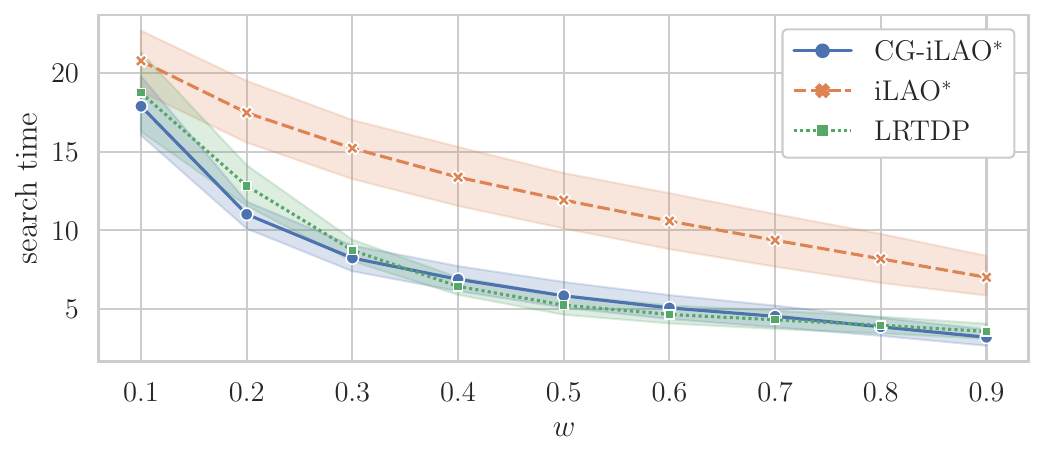}
\caption{Search time (excludes compute time for the heuristic) of algorithm with \hOptPerturbedW as \(w\) varies. We show mean and 95\% C.I. of all considered problems over 50 instances.}
\label{fig:perturbed-hstar}
\end{figure}

\section{Conclusion, Related and Future Work}

Building on existing connections between operations research and planning, we presented a new interpretation of heuristic search on SSPs as solving LPs using variable and constraint generation.
We exploit this equivalence to introduce a new and efficient separation oracle for SSPs, which enables a search algorithm to selectively add actions when they are deemed necessary to find the optimal solution.
This addresses the shortcoming of state-of-the-art algorithms that add all applicable actions during state expansion, with no mechanism for ignoring actions that will not contribute to the solution.
Using this principle, we generalised \ilao into a new optimal heuristic search algorithm \cgilao.
Empirical evaluation showed that \cgilao's ability to consider a subset of actions results in significant savings in the number of \qvalues computed, which in turn reduces the runtime of the algorithm compared to the state-of-the-art.

Regarding related work, \ref{lp:vi} has been approximated for factored MDPs to get a more compact LP, called the approximate LP (ALP).
\citet{Schuurmans2001:ConstrGenMDP} apply constraint generation to the ALP; their separation oracle has a similar condition for adding constraints as \cgilao; however, it checks for the condition naively, which is only practical on the compact ALP offered by factored MDPs, and is infeasible for SSPs.
Constraint generation lends itself well to complex planning problems where a relaxation can be efficiently solved and constraint violations by the relaxed solution can be efficiently detected, e.g., in multiagent planning~\cite{Calliess2021:ConstraintGenerationMultiAgentMIP} and metric hybrid factored planning in nonlinear domains~\cite{Say2019:ConstraintGenerationHybridPlanning}.
For POMDPs, an LP with constraint generation can be used to prune unneeded vectors from the set of vectors used to represent the value function~\cite{Walraven2017:VectorPruningPOMDP}.
In all these works, the separation oracle either naively checks all possible constraints or relies on sampling to find violations.

As future work, we aim to expand the application of \cgilao to more complex models that can benefit from our iterative method of generating applicable actions.
Models with imprecise parameters, such as MDPIPs and MDPSTs~\cite{white94mdpip,trevizan07:mdpst}, are suitable candidates for our approach since they have a \textit{minimax} semantics for the Bellman equations.
In this minimax semantics, the value function minimises the expected cost-to-go assuming that an adversary aims to maximise the cost-to-go by selecting the values of the imprecise parameters.
As a result, computing \Qsa in these models requires solving a maximisation problem; therefore, ignoring inactive actions could lead to significant improvements in performance.

Other suitable models include SSPs with \textit{PLTL constraints}~\cite{baumgartner18:pltldual,mallet21:pltlheuristics} in which both the state space and action space are augmented to keep track of constraint violations.
In these models, the concept of inactive actions can be extended to also prevent adding actions that lead to constraint violations to their partial problems.
The methods presented in this paper may also be applicable to model checking more broadly.
In particular, there has been work investigating how to use heuristics to guide the search for probabilistic reachability~\cite{Brazdil2014}, in which action elimination is applicable.

\section*{Acknowledgements}
We thank the anonymous reviewers for their feedback.
This research/project was undertaken with the assistance of resources and services from the National Computational Infrastructure (NCI), which is supported by the Australian Government.

\bibliography{references}

\begin{thebibliography}{26}
\providecommand{\natexlab}[1]{#1}

\bibitem[{Baumgartner, Thi{\'e}baux, and
  Trevizan(2018)}]{baumgartner18:pltldual}
Baumgartner, P.; Thi{\'e}baux, S.; and Trevizan, F. 2018.
\newblock {H}euristic {S}earch {P}lanning {W}ith {M}ulti-{O}bjective
  {P}robabilistic {LTL} {C}onstraints.
\newblock In \emph{Proc. of 16th Int. Conf. on Principles of Knowledge
  Representation and Reasoning (KR)}.

\bibitem[{Bellman(1957)}]{Bellman57}
Bellman, R. 1957.
\newblock \emph{Dynamic programming}.
\newblock Princeton University Press.

\bibitem[{Bertsekas(1995)}]{Bertsekas1995}
Bertsekas, D. 1995.
\newblock \emph{Dynamic Programming and Optimal Control}, volume~2.
\newblock Athena Scientific.

\bibitem[{Bertsekas and Tsitsiklis(1991)}]{Bertsekas1991:SSPs}
Bertsekas, D.; and Tsitsiklis, J. 1991.
\newblock An Analysis of Stochastic Shortest Path Problems.
\newblock \emph{Mathematics of Operations Research}.

\bibitem[{Bertsimas and Tsitsiklis(1997)}]{Bertsimas1997:introduction}
Bertsimas, D.; and Tsitsiklis, J. 1997.
\newblock \emph{Introduction to {Linear} {Optimization}}.
\newblock Athena Scientific.

\bibitem[{Bonet and Geffner(2003)}]{Bonet2003:lrtdp}
Bonet, B.; and Geffner, H. 2003.
\newblock Labeled {RTDP}: Improving the Convergence of Real-Time Dynamic
  Programming.
\newblock In \emph{Proc. of 13th Int. Conf. on Automated Planning and
  Scheduling (ICAPS)}.

\bibitem[{Br{\'a}zdil et~al.(2014)Br{\'a}zdil, Chatterjee, Chmel{\'i}k, Forejt,
  K{\v{r}}et{\'i}nsk{\'y}, Kwiatkowska, Parker, and Ujma}]{Brazdil2014}
Br{\'a}zdil, T.; Chatterjee, K.; Chmel{\'i}k, M.; Forejt, V.;
  K{\v{r}}et{\'i}nsk{\'y}, J.; Kwiatkowska, M.; Parker, D.; and Ujma, M. 2014.
\newblock Verification of Markov Decision Processes Using Learning Algorithms.
\newblock In \emph{Automated Technology for Verification and Analysis}.

\bibitem[{Buffet(2008)}]{Buffet2008:ippc}
Buffet, D. 2008.
\newblock International Planning Competition Uncertainty Part: Benchmarks and
  Results.

\bibitem[{Calliess and
  Roberts(2021)}]{Calliess2021:ConstraintGenerationMultiAgentMIP}
Calliess, J.-P.; and Roberts, S. 2021.
\newblock Multi-Agent Planning with Mixed-Integer Programming and Adaptive
  Interaction Constraint Generation (Extended Abstract).
\newblock \emph{Proc. of 14th Symposium on Combinatorial Search (SoCS)}.

\bibitem[{Dai, Weld et~al.(2009)}]{Dai2009:ftvi}
Dai, P.; Weld, D.; et~al. 2009.
\newblock Focused topological value iteration.
\newblock In \emph{Proc. of 19th Int. Conf. on Automated Planning and
  Scheduling (ICAPS)}, volume~19, 82--89.

\bibitem[{Hansen and Zilberstein(2001)}]{Hansen2001:ilao}
Hansen, E.; and Zilberstein, S. 2001.
\newblock {LAO}$^*$: {A} heuristic search algorithm that finds solutions with
  loops.
\newblock \emph{Artificial Intelligence}.

\bibitem[{Helmert and Domshlak(2009)}]{Helmert2009:lmcut}
Helmert, M.; and Domshlak, C. 2009.
\newblock Landmarks, {Critical} {Paths} and {Abstractions}: {What}'s the
  {Difference} {Anyway}?
\newblock \emph{Proc. of 19th Int. Conf. on Automated Planning and Scheduling
  (ICAPS)}.

\bibitem[{Little, Thiebaux et~al.(2007)}]{Little2007:PlannerVsReplanner}
Little, I.; Thiebaux, S.; et~al. 2007.
\newblock Probabilistic planning vs. replanning.
\newblock In \emph{ICAPS Workshop on IPC: Past, Present and Future}.

\bibitem[{Mallet, Thi{\'e}baux, and Trevizan(2021)}]{mallet21:pltlheuristics}
Mallet, I.; Thi{\'e}baux, S.; and Trevizan, F. 2021.
\newblock {P}rogression {H}euristics for {P}lanning with {P}robabilistic {LTL}
  {C}onstraints.
\newblock In \emph{Proc. of 35th AAAI Conference on Artificial Intelligence}.

\bibitem[{McMahan, Likhachev, and Gordon(2005)}]{McMahan2005:BRTDP}
McMahan, B.; Likhachev, M.; and Gordon, G. 2005.
\newblock Bounded Real-Time Dynamic Programming: RTDP with Monotone Upper
  Bounds and Performance Guarantees.
\newblock In \emph{Proc. of 22nd Int. Conf. on Machine Learning}.

\bibitem[{Sanner et~al.(2009)Sanner, Goetschalckx, Driessens, and
  Shani}]{sanner09:vpirtdp}
Sanner, S.; Goetschalckx, R.; Driessens, K.; and Shani, G. 2009.
\newblock Bayesian real-time dynamic programming.
\newblock In \emph{Proc. of 21st Int. Joint Conf. on AI (IJCAI)}.

\bibitem[{Say and Sanner(2019)}]{Say2019:ConstraintGenerationHybridPlanning}
Say, B.; and Sanner, S. 2019.
\newblock \emph{Metric Hybrid Factored Planning in Nonlinear Domains with
  Constraint Generation}, 502--518.
\newblock Springer International Publishing.

\bibitem[{Schmalz and Trevizan(2023)}]{Schmalz2024:CgilaoSourceCode}
Schmalz, J.; and Trevizan, F. 2023.
\newblock Code, benchmarks, and technical report for AAAI 2024 paper
  ``Efficient Constraint Generation for Stochastic Shortest Path Problems''.
\newblock \url{https://doi.org/10.5281/zenodo.10344842}.

\bibitem[{Schuurmans and Patrascu(2001)}]{Schuurmans2001:ConstrGenMDP}
Schuurmans, D.; and Patrascu, R. 2001.
\newblock Direct value-approximation for factored MDPs.
\newblock In \emph{Advances in Neural Information Processing Systems
  (NeurIPS)}.

\bibitem[{Smith and Simmons(2006)}]{smith06:frtdp}
Smith, T.; and Simmons, R. 2006.
\newblock Focused real-time dynamic programming for MDPs: Squeezing more out of
  a heuristic.
\newblock In \emph{Proc. of 20th AAAI Conf. on Artificial Intelligence}.

\bibitem[{Teichteil-K{\"o}nigsbuch(2012)}]{Teichteil-Koenigsbuch2012:S3P}
Teichteil-K{\"o}nigsbuch, F. 2012.
\newblock Stochastic safest and shortest path problems.
\newblock In \emph{Proc. of 26th AAAI Conf. on AI}.

\bibitem[{Trevizan, Cozman, and Barros(2007)}]{trevizan07:mdpst}
Trevizan, F.; Cozman, F.~G.; and Barros, L.~N. 2007.
\newblock {P}lanning under {R}isk and {K}nightian {U}ncertainty.
\newblock In \emph{Proc. of 20th Int. Joint Conf. on AI (IJCAI)}.

\bibitem[{Trevizan, Teichteil-K{\"o}nigsbuch, and
  Thi{\'e}baux(2017)}]{trevizan17:mcmp}
Trevizan, F.; Teichteil-K{\"o}nigsbuch, F.; and Thi{\'e}baux, S. 2017.
\newblock {E}fficient {S}olutions for {S}tochastic {S}hortest {P}ath {P}roblems
  with {D}ead {E}nds.
\newblock In \emph{Proc. of 33rd Int. Conf. on Uncertainty in Artificial
  Intelligence (UAI)}.

\bibitem[{Trevizan, Thi{\'e}baux, and Haslum(2017)}]{trevizan17:hpom}
Trevizan, F.; Thi{\'e}baux, S.; and Haslum, P. 2017.
\newblock {O}ccupation {M}easure {H}euristics for {P}robabilistic {P}lanning.
\newblock In \emph{Proc. of 27th Int. Conf. on Automated Planning and
  Scheduling (ICAPS)}.

\bibitem[{Walraven and Spaan(2017)}]{Walraven2017:VectorPruningPOMDP}
Walraven, E.; and Spaan, M. 2017.
\newblock Accelerated Vector Pruning for Optimal POMDP Solvers.
\newblock \emph{Proc. of 31st AAAI Conf. on AI}.

\bibitem[{White~III and Eldeib(1994)}]{white94mdpip}
White~III, C.~C.; and Eldeib, H.~K. 1994.
\newblock Markov decision processes with imprecise transition probabilities.
\newblock \emph{Operations Research}, 42(4): 739--749.

\end{thebibliography}

\clearpage

\section{Appendix}\label{sec:appendix}

We revisit the key questions of our experiments section by presenting additional results and explaining them.

\paragraphHack{What is the best planner and heuristic combination?}

We report the performance statistics of each planner and heuristic combination per benchmark problem in \crefrange{tab:bw}{tab:tireworld}.
Concretely, we report the coverage over 50 runs, and over the runs that terminated we report the mean and 95\% confidence interval associated with CPU time, \qvalues computed, and the number of calls to the heuristic.
For each problem (column) we highlight the planner and heuristic combinations with highest coverage, and among these combinations with tied highest coverage, we highlight the 95\% confidence intervals that are tied as the best (lowest) for each problem.
Formally, an interval \([x_l,x_u]\) is highlighted if there is no other interval \([y_l, y_u]\) for that problem s.t. \(y_u < x_l\).
The highlighted intervals are also known as non-dominated intervals since there is no interval that is strictly better.
If a planner and heuristic combination is missing from a table, then the planner and heuristic combination had a 0\% coverage over the relevant domain, e.g., all \ftvi combinations are missing from \parcn.

\paragraphHack{How many actions can \cgilao ignore?}

Following on from the relevant experiment in \cref{sec:experiments}, which considers the largest solved problems of each domain, we report how many actions \cgilao can ignore in \cref{tab:sparsity-over-part-ssp}.
In particular, we show the number of actions that are in \cgilao's final partial SSP, compared with the potential number of actions that can be in \cgilao's partial SSP if all applicable actions are added for all partial states, and the number of actions in \ilao's final partial SSP.
\cgilao's partial actions are significantly fewer than \ilao's total actions, i.e., \cgilao adds significantly fewer actions than \ilao, which supports our claim that, with a sufficiently informative heuristic, \cgilao considers fewer actions and thereby saves on computation.

\begin{table}[h]
\begin{center}
Num. Actions in Final Partial SSP \\
\begin{tabularx}{\columnwidth}{l r r r}
\toprule
                           & \thead{partial actions \\ \cgilao \\ \(\sum_{\s \in \partstates} |\partactions(\s)|\)} & \thead{potential actions \\ \cgilao \\ \(\sum_{\s \in \partstates} |\A(\s)|\)} & \thead{total actions \\ \ilao \\ \(\sum_{\s \in \partstates'} |\A(\s)|\)} \\ \hline
\bw                        & \(3.77 \times 10^{5}\)                           & \(9.98 \times 10^{5}\)                            & \(8.86 \times 10^{5}\)                       \\
\exbw                      & \(3.90 \times 10^{7}\)                           & \(5.98 \times 10^{7}\)                            & \(5.96 \times 10^{7}\)                       \\
\parcr                     & \(8.97 \times 10^{7}\)                           & \(2.19 \times 10^{8}\)                            & \(1.82 \times 10^{8}\)                       \\
\parcn                     & \(1.92 \times 10^{7}\)                           & \(3.91 \times 10^{7}\)                            & \(3.64 \times 10^{7}\)                       \\
\tireworld                 & \(8.05 \times 10^{7}\)                           & \(1.21 \times 10^{8}\)                            & \(1.24 \times 10^{8}\)                       \\
\bottomrule
\end{tabularx}
\end{center}
\caption{
This table shows, after running \cgilao and \ilao with \hROC, the number of \cgilao's partial actions (the actions added to \cgilao's final partial SSP), \cgilao's potential actions (the union of applicable actions over all states in \cgilao's final partial SSP), and \ilao's total actions (the actions added to \ilao's final partial SSP).
\partssp is the final partial SSP of \cgilao and \(\partssp'\) is final partial SSP of \ilao.
Each row corresponds to the largest solved problem of the domain and values are means over 50 instances.
}
\label{tab:sparsity-over-part-ssp}
\end{table}

\paragraphHack{What is the impact of the heuristic on \cgilao?}

We have already shown how the search time of \cgilao, \ilao, and \lrtdp is affected as \(w\) varies; we now show how the number of \qvalues is affected in \cref{fig:perturbed-hstar-qvalues}.
As \(w\) increases, \cgilao uses the fewest \qvalues.
This is in line with our other results, but it is important to note that, for \(w = 0.1\), \cgilao does not use fewer \qvalues than \lrtdp, which suggests that \cgilao does rely on a reasonably informative heuristic, and then scales the best as informativeness increases.

\begin{figure}[ht!]
\includegraphics[width=\linewidth]{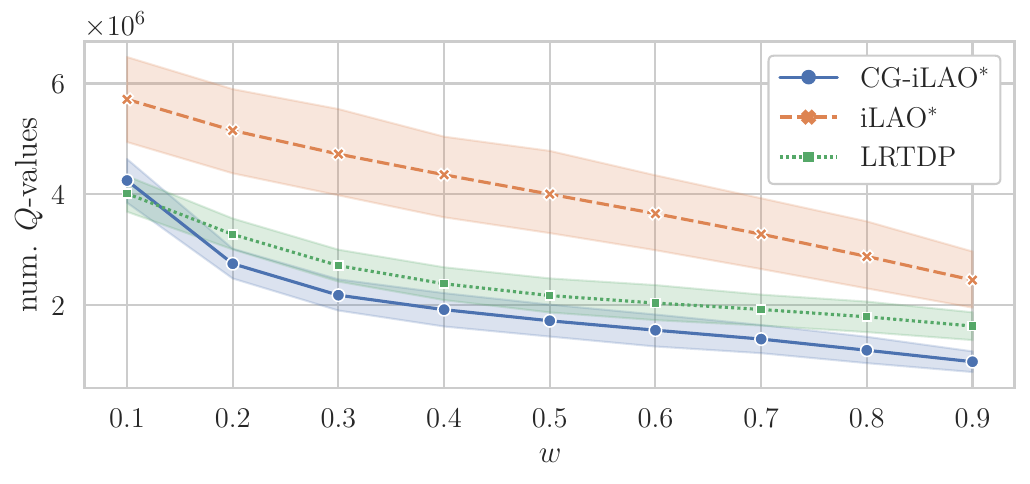}
\includegraphics[width=\linewidth]{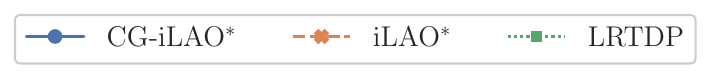}
\caption{Line plots over weight \(w\) of \hOptPerturbedW that show mean and 95\% C.I. of all considered problems over 50 instances of search time (excludes compute time for the heuristic) and the number of \qvalues computed.}
\label{fig:perturbed-hstar-qvalues}
\end{figure}

\newcommand{\covTabCaption}[1]{\caption{#1: \(A (B) [C] \langle D \rangle\) where \(A\) is the coverage out of 50 runs, \((B)\) is the mean CPU time in seconds (and the 95\% confidence interval), \([C]\) is the mean number of \qvalues computed (and the 95\% C.I.), and \(\langle D \rangle\) is the mean number of calls to the heuristic (and the 95\% C.I.). Values for $B$, $C$ and $D$ only consider successful runs.}}

\begin{landscape}
\begin{table}[]
\centering
\resizebox{\textwidth}{!}{%
\begin{tabular}{|l|l|l|l|l|l|l|l|}
\hline
algorithm & heuristic & 8-24967                                                                                               & 8-23171                                                                                                  & 8-25241                                                                                                   & 10-14262                                                                                         & 10-19475                                                                                           & 12-19848                                                                                                 \\ \hline
\cgilao   & \hROC     & \thead{\bestCovr{50} \\ \(\bestMean{(13.3 \pm 0.060)}\) \\ \(\bestMean{[14,543 \pm 175]}\) \\ \(\langle1,399 \pm 15.3\rangle\)}        & \thead{\bestCovr{50} \\ \(\bestMean{(13.7 \pm 0.082)}\) \\ \([22,432 \pm 920]\) \\ \(\bestMean{\langle1,982 \pm 81.5\rangle}\)}           & \thead{\bestCovr{50} \\ \(\bestMean{(13.1 \pm 0.078)}\) \\ \([8,844 \pm 1,019]\) \\ \(\langle1,158 \pm 114\rangle\)}            & \thead{\bestCovr{50} \\ \(\bestMean{(86.5 \pm 0.440)}\) \\ \(\bestMean{[8,781 \pm 834]}\) \\ \(\bestMean{\langle912 \pm 89.7\rangle}\)}      & \thead{\bestCovr{50} \\ \(\bestMean{(87.6 \pm 0.396)}\) \\ \(\bestMean{[15,269 \pm 1,968]}\) \\ \(\bestMean{\langle2,229 \pm 251\rangle}\)}    & \thead{\bestCovr{50} \\ \(\bestMean{(445 \pm 3.56)}\) \\ \(\bestMean{[79,771 \pm 13,664]}\) \\ \(\bestMean{\langle11,029 \pm 1,890\rangle}\)}        \\ \hline
\ftvi     & \hROC     & \thead{\bestCovr{50} \\ \((15.7 \pm 0.086)\) \\ \([16,134 \pm 248]\) \\ \(\langle5,351 \pm 98.9\rangle\)}        & \thead{\bestCovr{50} \\ \((15.5 \pm 0.175)\) \\ \(\bestMean{[17,695 \pm 829]}\) \\ \(\langle5,081 \pm 271\rangle\)}            & \thead{\bestCovr{50} \\ \((14.0 \pm 0.159)\) \\ \(\bestMean{[6,687 \pm 861]}\) \\ \(\langle2,314 \pm 248\rangle\)}              & \thead{\bestCovr{50} \\ \((91.4 \pm 0.664)\) \\ \([12,162 \pm 717]\) \\ \(\langle2,891 \pm 282\rangle\)}    & \thead{\bestCovr{50} \\ \((93.8 \pm 0.821)\) \\ \(\bestMean{[16,122 \pm 2,219]}\) \\ \(\langle6,170 \pm 723\rangle\)}    & \thead{\bestCovr{50} \\ \((523 \pm 9.09)\) \\ \([123,339 \pm 13,841]\) \\ \(\langle52,385 \pm 5,382\rangle\)}       \\ \hline
\ilao     & \hROC     & \thead{\bestCovr{50} \\ \((14.8 \pm 0.070)\) \\ \([26,575 \pm 1,122]\) \\ \(\bestMean{\langle1,234 \pm 8.92\rangle}\)}      & \thead{\bestCovr{50} \\ \((15.2 \pm 0.073)\) \\ \([49,315 \pm 2,779]\) \\ \(\bestMean{\langle1,914 \pm 76.1\rangle}\)}         & \thead{\bestCovr{50} \\ \((14.5 \pm 0.078)\) \\ \([20,672 \pm 2,543]\) \\ \(\bestMean{\langle889 \pm 94.4\rangle}\)}            & \thead{\bestCovr{50} \\ \((97.5 \pm 0.560)\) \\ \([36,477 \pm 4,751]\) \\ \(\bestMean{\langle819 \pm 43.4\rangle}\)}   & \thead{\bestCovr{50} \\ \((98.0 \pm 0.441)\) \\ \([41,180 \pm 5,678]\) \\ \(\bestMean{\langle1,888 \pm 206\rangle}\)}    & \thead{\bestCovr{50} \\ \((507 \pm 3.43)\) \\ \([260,141 \pm 33,101]\) \\ \(\bestMean{\langle13,659 \pm 1,739\rangle}\)}       \\ \hline
\lrtdp    & \hROC     & \thead{\bestCovr{50} \\ \((24.0 \pm 0.283)\) \\ \([102,730 \pm 2,772]\) \\ \(\langle19,342 \pm 522\rangle\)}     & \thead{\bestCovr{50} \\ \((25.6 \pm 0.424)\) \\ \([128,979 \pm 5,024]\) \\ \(\langle22,313 \pm 811\rangle\)}        & \thead{\bestCovr{50} \\ \((26.2 \pm 1.18)\) \\ \([144,314 \pm 14,547]\) \\ \(\langle22,941 \pm 2,053\rangle\)}       & \thead{\bestCovr{50} \\ \((109 \pm 1.32)\) \\ \([55,781 \pm 5,392]\) \\ \(\langle14,774 \pm 1,470\rangle\)} & \thead{\bestCovr{50} \\ \((150 \pm 5.61)\) \\ \([248,434 \pm 23,864]\) \\ \(\langle58,918 \pm 5,093\rangle\)} & \thead{7 \\ \((1,301 \pm 258)\) \\ \([2,304,921 \pm 724,145]\) \\ \(\langle532,833 \pm 159,334\rangle\)} \\ \hline
\cgilao   & \hLMcut   & \thead{\bestCovr{50} \\ \((132 \pm 7.05)\) \\ \([780,075 \pm 60,269]\) \\ \(\langle35,591 \pm 2,054\rangle\)}    & \thead{\bestCovr{50} \\ \((573 \pm 21.5)\) \\ \([3,609,948 \pm 220,483]\) \\ \(\langle129,509 \pm 5,591\rangle\)}   & \thead{\bestCovr{50} \\ \((512 \pm 29.6)\) \\ \([2,787,033 \pm 277,267]\) \\ \(\langle106,939 \pm 6,900\rangle\)}    &                                                                                                  &                                                                                                    &                                                                                                          \\ \hline
\ilao     & \hLMcut   & \thead{\bestCovr{50} \\ \((121 \pm 6.59)\) \\ \([851,211 \pm 54,597]\) \\ \(\langle31,891 \pm 1,871\rangle\)}    & \thead{\bestCovr{50} \\ \((545 \pm 21.0)\) \\ \([3,955,151 \pm 217,566]\) \\ \(\langle116,921 \pm 4,809\rangle\)}   & \thead{\bestCovr{50} \\ \((478 \pm 26.9)\) \\ \([2,948,487 \pm 241,209]\) \\ \(\langle99,287 \pm 6,219\rangle\)}     &                                                                                                  &                                                                                                    &                                                                                                          \\ \hline
\cgilao   & \hMax     & \thead{\bestCovr{50} \\ \((302 \pm 3.25)\) \\ \([16,176,234 \pm 22,293]\) \\ \(\langle459,032 \pm 180\rangle\)}  & \thead{\bestCovr{50} \\ \((1,615 \pm 14.4)\) \\ \([86,204,696 \pm 69,607]\) \\ \(\langle922,272 \pm 0.000\rangle\)} & \thead{\bestCovr{50} \\ \((1,688 \pm 10.3)\) \\ \([89,642,935 \pm 187,375]\) \\ \(\langle922,271 \pm 0.000\rangle\)} &                                                                                                  &                                                                                                    &                                                                                                          \\ \hline
\ilao     & \hMax     & \thead{\bestCovr{50} \\ \((240 \pm 3.19)\) \\ \([18,475,207 \pm 83,150]\) \\ \(\langle378,670 \pm 445\rangle\)}  & \thead{\bestCovr{50} \\ \((743 \pm 8.73)\) \\ \([60,577,420 \pm 130,400]\) \\ \(\langle805,407 \pm 188\rangle\)}    & \thead{\bestCovr{50} \\ \((819 \pm 8.47)\) \\ \([66,960,805 \pm 29,399]\) \\ \(\langle839,082 \pm 133\rangle\)}      &                                                                                                  &                                                                                                    &                                                                                                          \\ \hline
\lrtdp    & \hMax     & \thead{\bestCovr{50} \\ \((567 \pm 2.48)\) \\ \([33,537,233 \pm 30,476]\) \\ \(\langle922,203 \pm 2.75\rangle\)} & \thead{\bestCovr{50} \\ \((748 \pm 4.41)\) \\ \([49,482,874 \pm 26,003]\) \\ \(\langle922,273 \pm 0.000\rangle\)}   & \thead{\bestCovr{50} \\ \((863 \pm 6.07)\) \\ \([58,962,949 \pm 50,989]\) \\ \(\langle922,272 \pm 0.000\rangle\)}    &                                                                                                  &                                                                                                    &                                                                                                          \\ \hline
\end{tabular}%
}
\covTabCaption{\bw}
\label{tab:bw}
\end{table}
\end{landscape}

\begin{landscape}
\begin{table}[]
\centering
\resizebox{\textwidth}{!}{%
\begin{tabular}{|l|l|l|l|l|l|l|}
\hline
algorithm & heuristic & p07-n7-N9-s7                                                                                          & p08-n8-N10-s8                                                                                                & p09-n9-N11-s9                                                                                           & p10-n10-N12-s10                                                                                          & p11-n11-N13-s11                                                                                              \\ \hline
\cgilao   & \hROC     & \thead{\bestCovr{50} \\ \((0.585 \pm 0.031)\) \\ \([3,289 \pm 360]\) \\ \(\langle1,406 \pm 130\rangle\)}         & \thead{\bestCovr{50} \\ \((68.8 \pm 1.70)\) \\ \([8,551,674 \pm 24,812]\) \\ \(\langle164,822 \pm 1,461\rangle\)}       & \thead{\bestCovr{50} \\ \((7.09 \pm 0.086)\) \\ \(\bestMean{[458,161 \pm 2,965]}\) \\ \(\bestMean{\langle21,503 \pm 220\rangle}\)}       & \thead{\bestCovr{50} \\ \((299 \pm 9.21)\) \\ \([75,922,919 \pm 228,249]\) \\ \(\langle214,709 \pm 1,046\rangle\)}  & \thead{\bestCovr{50} \\ \(\bestMean{(433 \pm 2.36)}\) \\ \([11,183,164 \pm 33,827]\) \\ \(\langle1,444,012 \pm 2,101\rangle\)}     \\ \hline
\ftvi     & \hROC     & \thead{\bestCovr{50} \\ \((1.15 \pm 0.127)\) \\ \([4,323 \pm 687]\) \\ \(\langle3,929 \pm 564\rangle\)}          &                                                                                                              &                                                                                                         &                                                                                                          &                                                                                                              \\ \hline
\ilao     & \hROC     & \thead{\bestCovr{50} \\ \((0.601 \pm 0.026)\) \\ \([9,971 \pm 571]\) \\ \(\langle1,434 \pm 112\rangle\)}         & \thead{\bestCovr{50} \\ \((200 \pm 1.86)\) \\ \([31,838,096 \pm 38,023]\) \\ \(\langle160,179 \pm 1,431\rangle\)}       & \thead{\bestCovr{50} \\ \((21.3 \pm 0.802)\) \\ \([2,689,211 \pm 133,082]\) \\ \(\langle27,797 \pm 284\rangle\)}   &                                                                                                          & \thead{\bestCovr{50} \\ \((490 \pm 2.96)\) \\ \([26,675,612 \pm 33,185]\) \\ \(\langle1,465,487 \pm 1,379\rangle\)}     \\ \hline
\lrtdp    & \hROC     & \thead{\bestCovr{50} \\ \((1.16 \pm 0.092)\) \\ \([11,266 \pm 1,480]\) \\ \(\langle3,904 \pm 376\rangle\)}       & \thead{\bestCovr{50} \\ \(\bestMean{(62.5 \pm 1.01)}\) \\ \([3,048,449 \pm 95,535]\) \\ \(\langle228,649 \pm 3,906\rangle\)}       & \thead{\bestCovr{50} \\ \((19.9 \pm 0.270)\) \\ \([1,688,157 \pm 20,216]\) \\ \(\langle63,101 \pm 985\rangle\)}    & \thead{\bestCovr{50} \\ \((136 \pm 1.80)\) \\ \(\bestMean{[8,741,452 \pm 62,416]}\) \\ \(\langle428,008 \pm 5,579\rangle\)}    & \thead{\bestCovr{50} \\ \((1,272 \pm 26.3)\) \\ \([20,222,574 \pm 270,633]\) \\ \(\langle4,334,206 \pm 68,377\rangle\)} \\ \hline
\cgilao   & \hLMcut   & \thead{\bestCovr{50} \\ \(\bestMean{(0.136 \pm 0.006)}\) \\ \(\bestMean{[1,672 \pm 69.7]}\) \\ \(\bestMean{\langle868 \pm 34.1\rangle}\)}         & \thead{\bestCovr{50} \\ \((65.8 \pm 1.39)\) \\ \([9,020,624 \pm 21,098]\) \\ \(\bestMean{\langle137,032 \pm 1,593\rangle}\)}       & \thead{\bestCovr{50} \\ \((8.88 \pm 0.082)\) \\ \([598,012 \pm 3,912]\) \\ \(\langle22,187 \pm 320\rangle\)}       & \thead{\bestCovr{50} \\ \((347 \pm 8.46)\) \\ \([67,704,563 \pm 205,137]\) \\ \(\bestMean{\langle208,048 \pm 845\rangle}\)}    & \thead{\bestCovr{50} \\ \((1,016 \pm 4.25)\) \\ \(\bestMean{[11,026,216 \pm 33,388]}\) \\ \(\langle1,268,011 \pm 2,443\rangle\)}   \\ \hline
\ftvi     & \hLMcut   & \thead{\bestCovr{50} \\ \((0.561 \pm 0.082)\) \\ \([3,828 \pm 615]\) \\ \(\langle3,431 \pm 506\rangle\)}         &                                                                                                              &                                                                                                         &                                                                                                          &                                                                                                              \\ \hline
\ilao     & \hLMcut   & \thead{\bestCovr{50} \\ \((0.233 \pm 0.021)\) \\ \([8,635 \pm 453]\) \\ \(\langle1,297 \pm 101\rangle\)}         & \thead{\bestCovr{50} \\ \((229 \pm 3.25)\) \\ \([35,341,647 \pm 45,652]\) \\ \(\bestMean{\langle137,593 \pm 1,550\rangle}\)}       & \thead{\bestCovr{50} \\ \((21.8 \pm 0.678)\) \\ \([2,581,881 \pm 102,361]\) \\ \(\langle27,040 \pm 285\rangle\)}   &                                                                                                          & \thead{\bestCovr{50} \\ \((1,111 \pm 6.48)\) \\ \([25,842,856 \pm 43,427]\) \\ \(\bestMean{\langle1,263,692 \pm 837\rangle}\)}     \\ \hline
\lrtdp    & \hLMcut   & \thead{\bestCovr{50} \\ \((0.485 \pm 0.047)\) \\ \([5,561 \pm 716]\) \\ \(\langle2,863 \pm 280\rangle\)}         & \thead{\bestCovr{50} \\ \((91.9 \pm 1.23)\) \\ \(\bestMean{[2,858,397 \pm 56,354]}\) \\ \(\langle240,751 \pm 3,305\rangle\)}       & \thead{\bestCovr{50} \\ \((35.7 \pm 0.613)\) \\ \([1,704,587 \pm 20,139]\) \\ \(\langle65,962 \pm 1,098\rangle\)}  & \thead{\bestCovr{50} \\ \((392 \pm 5.43)\) \\ \([9,892,329 \pm 59,686]\) \\ \(\langle516,447 \pm 7,122\rangle\)}    &                                                                                                              \\ \hline
\cgilao   & \hMax     & \thead{\bestCovr{50} \\ \((1.69 \pm 0.011)\) \\ \([284,993 \pm 2,474]\) \\ \(\langle43,323 \pm 242\rangle\)}     & \thead{\bestCovr{50} \\ \((78.5 \pm 1.96)\) \\ \([18,188,194 \pm 35,225]\) \\ \(\langle819,653 \pm 1,842\rangle\)}      & \thead{\bestCovr{50} \\ \(\bestMean{(4.00 \pm 0.051)}\) \\ \([966,920 \pm 5,282]\) \\ \(\langle43,816 \pm 443\rangle\)}       & \thead{\bestCovr{50} \\ \((308 \pm 11.5)\) \\ \([107,680,890 \pm 50,168]\) \\ \(\langle282,810 \pm 945\rangle\)}    &                                                                                                              \\ \hline
\ilao     & \hMax     & \thead{\bestCovr{50} \\ \((2.98 \pm 0.023)\) \\ \([837,143 \pm 4,822]\) \\ \(\langle41,336 \pm 324\rangle\)}     & \thead{\bestCovr{50} \\ \((331 \pm 5.48)\) \\ \([57,048,934 \pm 51,381]\) \\ \(\langle739,570 \pm 1,511\rangle\)}       & \thead{\bestCovr{50} \\ \((13.8 \pm 0.764)\) \\ \([2,678,145 \pm 114,072]\) \\ \(\langle42,956 \pm 410\rangle\)}   &                                                                                                          &                                                                                                              \\ \hline
\lrtdp    & \hMax     & \thead{\bestCovr{50} \\ \((6.59 \pm 0.169)\) \\ \([779,021 \pm 18,621]\) \\ \(\langle175,517 \pm 3,824\rangle\)} & \thead{\bestCovr{50} \\ \((89.1 \pm 0.880)\) \\ \([19,435,368 \pm 123,222]\) \\ \(\langle1,349,405 \pm 25,971\rangle\)} & \thead{\bestCovr{50} \\ \((14.0 \pm 0.195)\) \\ \([3,314,262 \pm 32,347]\) \\ \(\langle145,387 \pm 1,891\rangle\)} & \thead{\bestCovr{50} \\ \(\bestMean{(105 \pm 0.821)}\) \\ \([19,713,845 \pm 112,489]\) \\ \(\langle842,756 \pm 9,501\rangle\)} &                                                                                                              \\ \hline
\end{tabular}%
}
\covTabCaption{\exbw}
\label{tab:exbw}
\end{table}
\end{landscape}

\begin{landscape}
\begin{table}[]
\centering
\resizebox{\textwidth}{!}{%
\begin{tabular}{|l|l|l|l|l|l|l|l|}
\hline
algorithm & heuristic & s4-c1                                                                                                   & s4-c2                                                                                                      & s4-c3                                                                                                           & s5-c1                                                                                                     & s5-c2                                                                                                      & s5-c3                                                                                                         \\ \hline
\cgilao   & \hROC     & \thead{\bestCovr{50} \\ \((10.5 \pm 0.155)\) \\ \([1,013,976 \pm 15,173]\) \\ \(\langle31,350 \pm 447\rangle\)}    & \thead{\bestCovr{50} \\ \((16.1 \pm 0.256)\) \\ \([1,715,321 \pm 20,139]\) \\ \(\langle42,106 \pm 761\rangle\)}       & \thead{\bestCovr{50} \\ \((22.1 \pm 0.403)\) \\ \([3,022,500 \pm 43,724]\) \\ \(\langle54,854 \pm 1,271\rangle\)}          & \thead{\bestCovr{50} \\ \((114 \pm 1.38)\) \\ \([12,324,266 \pm 114,576]\) \\ \(\langle307,166 \pm 3,809\rangle\)}   & \thead{\bestCovr{50} \\ \((180 \pm 3.49)\) \\ \([20,993,881 \pm 208,715]\) \\ \(\langle404,594 \pm 7,231\rangle\)}    & \thead{\bestCovr{50} \\ \((244 \pm 4.14)\) \\ \([37,055,726 \pm 466,292]\) \\ \(\langle515,254 \pm 12,255\rangle\)}      \\ \hline
\ilao     & \hROC     & \thead{\bestCovr{50} \\ \((14.0 \pm 0.194)\) \\ \([3,044,522 \pm 74,676]\) \\ \(\bestMean{\langle27,845 \pm 269\rangle}\)}    & \thead{\bestCovr{50} \\ \((24.0 \pm 0.389)\) \\ \([5,344,883 \pm 121,538]\) \\ \(\bestMean{\langle38,451 \pm 543\rangle}\)}      & \thead{\bestCovr{50} \\ \((39.8 \pm 0.888)\) \\ \([9,597,755 \pm 277,641]\) \\ \(\bestMean{\langle50,201 \pm 1,045\rangle}\)}         & \thead{\bestCovr{50} \\ \((184 \pm 2.73)\) \\ \([43,507,677 \pm 1,056,857]\) \\ \(\bestMean{\langle286,297 \pm 2,390\rangle}\)} & \thead{\bestCovr{50} \\ \((321 \pm 6.36)\) \\ \([73,968,382 \pm 1,836,330]\) \\ \(\bestMean{\langle378,071 \pm 5,873\rangle}\)}  & \thead{\bestCovr{50} \\ \((525 \pm 8.69)\) \\ \([127,869,617 \pm 3,713,867]\) \\ \(\bestMean{\langle478,778 \pm 9,732\rangle}\)}    \\ \hline
\lrtdp    & \hROC     & \thead{\bestCovr{50} \\ \(\bestMean{(8.78 \pm 0.141)}\) \\ \(\bestMean{[436,531 \pm 3,058]}\) \\ \(\langle31,717 \pm 561\rangle\)}       & \thead{\bestCovr{50} \\ \(\bestMean{(13.1 \pm 0.270)}\) \\ \(\bestMean{[585,888 \pm 3,071]}\) \\ \(\langle41,835 \pm 942\rangle\)}          & \thead{\bestCovr{50} \\ \(\bestMean{(16.4 \pm 0.372)}\) \\ \(\bestMean{[680,237 \pm 4,190]}\) \\ \(\langle53,681 \pm 1,522\rangle\)}             & \thead{\bestCovr{50} \\ \(\bestMean{(100 \pm 1.23)}\) \\ \(\bestMean{[5,261,132 \pm 41,743]}\) \\ \(\langle310,181 \pm 4,941\rangle\)}     & \thead{\bestCovr{50} \\ \(\bestMean{(150 \pm 2.32)}\) \\ \(\bestMean{[7,098,114 \pm 43,550]}\) \\ \(\langle398,327 \pm 8,576\rangle\)}      & \thead{\bestCovr{50} \\ \(\bestMean{(184 \pm 3.72)}\) \\ \(\bestMean{[8,273,497 \pm 40,891]}\) \\ \(\bestMean{\langle496,371 \pm 13,701\rangle}\)}        \\ \hline
\cgilao   & \hLMcut   & \thead{\bestCovr{50} \\ \((10.7 \pm 0.070)\) \\ \([1,608,662 \pm 17,455]\) \\ \(\langle46,829 \pm 436\rangle\)}    & \thead{\bestCovr{50} \\ \((19.4 \pm 0.138)\) \\ \([3,367,791 \pm 40,108]\) \\ \(\langle77,150 \pm 782\rangle\)}       & \thead{\bestCovr{50} \\ \((33.1 \pm 0.196)\) \\ \([5,962,957 \pm 81,595]\) \\ \(\langle138,353 \pm 1,487\rangle\)}         & \thead{\bestCovr{50} \\ \((141 \pm 0.855)\) \\ \([19,536,893 \pm 198,117]\) \\ \(\langle484,752 \pm 4,355\rangle\)}  & \thead{\bestCovr{50} \\ \((268 \pm 2.53)\) \\ \([43,716,409 \pm 423,614]\) \\ \(\langle829,264 \pm 8,947\rangle\)}    & \thead{\bestCovr{50} \\ \((477 \pm 5.28)\) \\ \([75,760,732 \pm 884,923]\) \\ \(\langle1,476,484 \pm 17,030\rangle\)}    \\ \hline
\ilao     & \hLMcut   & \thead{\bestCovr{50} \\ \((16.5 \pm 0.198)\) \\ \([4,649,445 \pm 114,510]\) \\ \(\langle39,767 \pm 379\rangle\)}   & \thead{\bestCovr{50} \\ \((36.5 \pm 0.499)\) \\ \([10,851,237 \pm 231,976]\) \\ \(\langle64,040 \pm 835\rangle\)}     & \thead{\bestCovr{50} \\ \((61.8 \pm 1.06)\) \\ \([17,429,097 \pm 478,163]\) \\ \(\langle109,311 \pm 1,250\rangle\)}        & \thead{\bestCovr{50} \\ \((229 \pm 3.64)\) \\ \([59,864,905 \pm 1,529,872]\) \\ \(\langle405,784 \pm 3,557\rangle\)} & \thead{\bestCovr{50} \\ \((539 \pm 8.11)\) \\ \([148,174,993 \pm 3,339,338]\) \\ \(\langle658,100 \pm 8,781\rangle\)} & \thead{\bestCovr{50} \\ \((941 \pm 15.5)\) \\ \([238,099,853 \pm 6,527,230]\) \\ \(\langle1,121,729 \pm 13,299\rangle\)} \\ \hline
\lrtdp    & \hLMcut   & \thead{\bestCovr{50} \\ \((11.6 \pm 0.165)\) \\ \([932,940 \pm 5,927]\) \\ \(\langle68,166 \pm 1,087\rangle\)}     & \thead{\bestCovr{50} \\ \((20.9 \pm 0.315)\) \\ \([1,490,562 \pm 8,226]\) \\ \(\langle116,119 \pm 2,166\rangle\)}     & \thead{\bestCovr{50} \\ \((32.0 \pm 0.404)\) \\ \([2,113,526 \pm 10,839]\) \\ \(\langle177,797 \pm 3,138\rangle\)}         & \thead{\bestCovr{50} \\ \((156 \pm 1.72)\) \\ \([11,634,002 \pm 108,431]\) \\ \(\langle702,833 \pm 10,416\rangle\)}  & \thead{\bestCovr{50} \\ \((269 \pm 3.27)\) \\ \([18,809,583 \pm 146,182]\) \\ \(\langle1,182,469 \pm 23,053\rangle\)} & \thead{\bestCovr{50} \\ \((432 \pm 5.53)\) \\ \([26,274,405 \pm 171,814]\) \\ \(\langle1,808,411 \pm 33,918\rangle\)}    \\ \hline
\cgilao   & \hMax     & \thead{\bestCovr{50} \\ \((107 \pm 1.18)\) \\ \([45,597,493 \pm 83,407]\) \\ \(\langle639,153 \pm 73.7\rangle\)}   & \thead{\bestCovr{50} \\ \((374 \pm 5.78)\) \\ \([132,743,888 \pm 220,135]\) \\ \(\langle1,940,914 \pm 351\rangle\)}   & \thead{\bestCovr{50} \\ \((610 \pm 11.1)\) \\ \([203,454,719 \pm 410,922]\) \\ \(\langle2,773,558 \pm 1,585\rangle\)}      &                                                                                                           &                                                                                                            &                                                                                                               \\ \hline
\ilao     & \hMax     & \thead{\bestCovr{50} \\ \((217 \pm 3.22)\) \\ \([99,246,994 \pm 1,488,977]\) \\ \(\langle618,593 \pm 142\rangle\)} & \thead{\bestCovr{50} \\ \((790 \pm 14.3)\) \\ \([317,087,417 \pm 6,424,885]\) \\ \(\langle1,787,826 \pm 447\rangle\)} & \thead{\bestCovr{50} \\ \((1,425 \pm 38.8)\) \\ \([448,028,105 \pm 13,746,351]\) \\ \(\langle2,229,375 \pm 1,267\rangle\)} &                                                                                                           &                                                                                                            &                                                                                                               \\ \hline
\lrtdp    & \hMax     & \thead{\bestCovr{50} \\ \((39.5 \pm 0.419)\) \\ \([17,779,098 \pm 21,526]\) \\ \(\langle588,247 \pm 45.9\rangle\)} & \thead{\bestCovr{50} \\ \((254 \pm 2.51)\) \\ \([126,641,023 \pm 252,570]\) \\ \(\langle1,627,845 \pm 75.4\rangle\)}  & \thead{\bestCovr{50} \\ \((403 \pm 5.60)\) \\ \([175,415,264 \pm 251,069]\) \\ \(\langle2,593,447 \pm 1,011\rangle\)}      &                                                                                                           &                                                                                                            &                                                                                                               \\ \hline
\end{tabular}%
}
\covTabCaption{\parcn}
\label{tab:parcn}
\end{table}
\end{landscape}

\begin{landscape}
\begin{table}[]
\centering
\resizebox{\textwidth}{!}{%
\begin{tabular}{|l|l|l|l|l|l|l|l|}
\hline
algorithm & heuristic & s4-c1                                                                                                  & s4-c2                                                                                                    & s4-c3                                                                                                      & s5-c1                                                                                                        & s5-c2                                                                                                          \\ \hline
\cgilao   & \hROC     & \thead{\bestCovr{50} \\ \(\bestMean{(45.9 \pm 0.525)}\) \\ \(\bestMean{[6,228,997 \pm 20,656]}\) \\ \(\langle95,096 \pm 261\rangle\)}   & \thead{\bestCovr{50} \\ \(\bestMean{(121 \pm 1.38)}\) \\ \([14,811,715 \pm 28,578]\) \\ \(\langle208,401 \pm 175\rangle\)}     & \thead{\bestCovr{50} \\ \((181 \pm 2.27)\) \\ \([22,138,217 \pm 27,456]\) \\ \(\langle281,753 \pm 109\rangle\)}       & \thead{\bestCovr{50} \\ \(\bestMean{(583 \pm 7.32)}\) \\ \(\bestMean{[63,821,278 \pm 180,676]}\) \\ \(\langle1,061,846 \pm 1,937\rangle\)}    & \thead{\bestCovr{50} \\ \((1,657 \pm 16.4)\) \\ \(\bestMean{[161,991,609 \pm 242,532]}\) \\ \(\langle2,299,310 \pm 1,043\rangle\)} \\ \hline
\ftvi     & \hROC     &                                                                                                        & \thead{\bestCovr{50} \\ \((765 \pm 37.9)\) \\ \(\bestMean{[5,514,887 \pm 39,772]}\) \\ \(\langle235,023 \pm 1,076\rangle\)}    & \thead{\bestCovr{50} \\ \((613 \pm 69.3)\) \\ \(\bestMean{[6,767,005 \pm 49,323]}\) \\ \(\langle307,358 \pm 1,049\rangle\)}      &                                                                                                              &                                                                                                                \\ \hline
\ilao     & \hROC     & \thead{\bestCovr{50} \\ \((58.3 \pm 0.398)\) \\ \([15,636,208 \pm 88,234]\) \\ \(\bestMean{\langle84,340 \pm 123\rangle}\)}  & \thead{\bestCovr{50} \\ \((126 \pm 0.667)\) \\ \([32,377,004 \pm 137,024]\) \\ \(\bestMean{\langle183,079 \pm 223\rangle}\)}   & \thead{\bestCovr{50} \\ \(\bestMean{(168 \pm 0.749)}\) \\ \([41,961,804 \pm 133,535]\) \\ \(\langle268,261 \pm 312\rangle\)}     & \thead{\bestCovr{50} \\ \((652 \pm 4.57)\) \\ \([150,228,580 \pm 756,329]\) \\ \(\bestMean{\langle887,525 \pm 1,217\rangle}\)}     & \thead{\bestCovr{50} \\ \(\bestMean{(1,502 \pm 17.5)}\) \\ \([292,637,047 \pm 1,224,742]\) \\ \(\bestMean{\langle1,945,156 \pm 3,643\rangle}\)} \\ \hline
\lrtdp    & \hROC     & \thead{\bestCovr{50} \\ \((93.2 \pm 0.568)\) \\ \([41,042,790 \pm 152,970]\) \\ \(\langle92,102 \pm 236\rangle\)} & \thead{\bestCovr{50} \\ \((674 \pm 4.92)\) \\ \([352,166,761 \pm 1,539,678]\) \\ \(\langle196,298 \pm 270\rangle\)} & \thead{\bestCovr{50} \\ \((1,513 \pm 8.48)\) \\ \([797,662,187 \pm 3,139,450]\) \\ \(\bestMean{\langle265,844 \pm 213\rangle}\)} & \thead{\bestCovr{50} \\ \((1,335 \pm 11.3)\) \\ \([547,832,850 \pm 1,127,533]\) \\ \(\langle984,575 \pm 2,753\rangle\)} &                                                                                                                \\ \hline
\cgilao   & \hLMcut   & \thead{\bestCovr{50} \\ \((68.8 \pm 0.339)\) \\ \([9,184,226 \pm 26,200]\) \\ \(\langle148,718 \pm 270\rangle\)}  & \thead{\bestCovr{50} \\ \((212 \pm 2.82)\) \\ \([26,805,385 \pm 35,474]\) \\ \(\langle351,686 \pm 229\rangle\)}     & \thead{\bestCovr{50} \\ \((322 \pm 2.82)\) \\ \([38,522,863 \pm 94,665]\) \\ \(\langle517,017 \pm 607\rangle\)}       & \thead{\bestCovr{50} \\ \((1,003 \pm 7.77)\) \\ \([105,649,983 \pm 292,360]\) \\ \(\langle1,669,291 \pm 2,205\rangle\)} &                                                                                                                \\ \hline
\ftvi     & \hLMcut   &                                                                                                        &                                                                                                          & \thead{20 \\ \((367 \pm 116)\) \\ \([12,477,085 \pm 70,147]\) \\ \(\langle563,694 \pm 6,066\rangle\)}      &                                                                                                              &                                                                                                                \\ \hline
\ilao     & \hLMcut   & \thead{\bestCovr{50} \\ \((84.9 \pm 1.31)\) \\ \([20,392,448 \pm 117,401]\) \\ \(\langle143,780 \pm 238\rangle\)} & \thead{\bestCovr{50} \\ \((213 \pm 1.13)\) \\ \([49,478,612 \pm 188,165]\) \\ \(\langle342,211 \pm 345\rangle\)}    & \thead{\bestCovr{50} \\ \((296 \pm 2.26)\) \\ \([67,320,675 \pm 240,270]\) \\ \(\langle471,351 \pm 641\rangle\)}      & \thead{\bestCovr{50} \\ \((1,112 \pm 6.03)\) \\ \([209,913,627 \pm 967,213]\) \\ \(\langle1,614,163 \pm 2,175\rangle\)} &                                                                                                                \\ \hline
\lrtdp    & \hLMcut   & \thead{\bestCovr{50} \\ \((340 \pm 1.30)\) \\ \([119,765,590 \pm 396,502]\) \\ \(\langle190,315 \pm 712\rangle\)} &                                                                                                          &                                                                                                            &                                                                                                              &                                                                                                                \\ \hline
\end{tabular}%
}
\covTabCaption{\parcr}
\label{tab:parcr}
\end{table}
\end{landscape}

\begin{table*}[]
\centering
\resizebox{\textwidth}{!}{%
\begin{tabular}{|l|l|l|l|l|l|l|}
\hline
algorithm & heuristic & \(\tireworld(4, 8)\) (original \tireworld 4) & \(\tireworld(5, 10)\) (original \tireworld 5) & \(\tireworld(6, 8)\) & \(\tireworld(6, 9)\)                                                                                           \\ \hline
\cgilao   & \hROC     & \thead{\bestCovr{50} \\ \((14.4 \pm 0.211)\) \\ \([1,148,040 \pm 16,387]\) \\ \(\bestMean{\langle40,634 \pm 136\rangle}\)}    & \thead{\bestCovr{50} \\ \(\bestMean{(395 \pm 3.11)}\) \\ \([31,842,440 \pm 247,363]\) \\ \(\bestMean{\langle579,176 \pm 1,779\rangle}\)}      & \thead{\bestCovr{50} \\ \(\bestMean{(516 \pm 6.29)}\) \\ \([32,427,798 \pm 556,567]\) \\ \(\bestMean{\langle908,936 \pm 1,665\rangle}\)}         & \thead{\bestCovr{50} \\ \(\bestMean{(1,223 \pm 14.4)}\) \\ \(\bestMean{[80,746,215 \pm 1,061,667]}\) \\ \(\bestMean{\langle1,875,857 \pm 4,111\rangle}\)} \\ \hline
\ftvi     & \hROC     & \thead{\bestCovr{50} \\ \((42.0 \pm 0.601)\) \\ \(\bestMean{[386,702 \pm 3,356]}\) \\ \(\langle190,435 \pm 3,029\rangle\)}    & \thead{\bestCovr{50} \\ \((695 \pm 11.5)\) \\ \(\bestMean{[7,290,773 \pm 83,692]}\) \\ \(\langle2,400,870 \pm 43,908\rangle\)}     & \thead{\bestCovr{50} \\ \((1,156 \pm 31.4)\) \\ \(\bestMean{[9,429,896 \pm 120,816]}\) \\ \(\langle3,646,876 \pm 109,529\rangle\)}    &                                                                                                               \\ \hline
\ilao     & \hROC     & \thead{\bestCovr{50} \\ \((20.3 \pm 0.162)\) \\ \([2,456,523 \pm 11,941]\) \\ \(\langle50,203 \pm 295\rangle\)}    & \thead{\bestCovr{50} \\ \((627 \pm 5.06)\) \\ \([64,753,531 \pm 556,643]\) \\ \(\langle730,691 \pm 4,690\rangle\)}      & \thead{\bestCovr{50} \\ \((794 \pm 9.02)\) \\ \([63,978,036 \pm 925,163]\) \\ \(\langle1,175,526 \pm 7,985\rangle\)}       &                                                                                                               \\ \hline
\lrtdp    & \hROC     & \thead{\bestCovr{50} \\ \((23.6 \pm 0.134)\) \\ \([2,003,448 \pm 12,008]\) \\ \(\langle88,076 \pm 545\rangle\)}    & \thead{\bestCovr{50} \\ \((481 \pm 6.70)\) \\ \([49,385,315 \pm 285,830]\) \\ \(\langle1,048,156 \pm 7,230\rangle\)}    & \thead{\bestCovr{50} \\ \((719 \pm 7.32)\) \\ \([58,413,438 \pm 267,244]\) \\ \(\langle1,563,223 \pm 13,711\rangle\)}      & \thead{45 \\ \((1,667 \pm 16.6)\) \\ \([140,361,933 \pm 662,188]\) \\ \(\langle3,195,700 \pm 27,373\rangle\)} \\ \hline
\cgilao   & \hLMcut   & \thead{\bestCovr{50} \\ \((14.9 \pm 0.380)\) \\ \([2,090,103 \pm 11,607]\) \\ \(\langle100,788 \pm 452\rangle\)}   & \thead{\bestCovr{50} \\ \((581 \pm 10.5)\) \\ \([58,509,834 \pm 285,036]\) \\ \(\langle1,717,974 \pm 10,015\rangle\)}   & \thead{\bestCovr{50} \\ \((972 \pm 16.0)\) \\ \([90,024,795 \pm 604,542]\) \\ \(\langle3,208,338 \pm 17,172\rangle\)}      &                                                                                                               \\ \hline
\ftvi     & \hLMcut   & \thead{\bestCovr{50} \\ \((20.7 \pm 0.257)\) \\ \([991,116 \pm 14,912]\) \\ \(\langle460,168 \pm 6,570\rangle\)}   & \thead{\bestCovr{50} \\ \((746 \pm 13.7)\) \\ \([21,539,497 \pm 356,024]\) \\ \(\langle7,726,282 \pm 108,583\rangle\)}  & \thead{37 \\ \((1,621 \pm 29.0)\) \\ \([36,677,426 \pm 771,289]\) \\ \(\langle14,835,983 \pm 193,833\rangle\)}  &                                                                                                               \\ \hline
\ilao     & \hLMcut   & \thead{\bestCovr{50} \\ \((22.7 \pm 0.212)\) \\ \([4,444,825 \pm 21,791]\) \\ \(\langle111,278 \pm 490\rangle\)}   & \thead{\bestCovr{50} \\ \((1,082 \pm 12.2)\) \\ \([129,029,339 \pm 518,976]\) \\ \(\langle1,893,490 \pm 8,924\rangle\)} & \thead{40 \\ \((1,735 \pm 12.3)\) \\ \([195,389,276 \pm 910,083]\) \\ \(\langle3,527,186 \pm 17,438\rangle\)}  &                                                                                                               \\ \hline
\lrtdp    & \hLMcut   & \thead{\bestCovr{50} \\ \((17.3 \pm 0.114)\) \\ \([3,018,205 \pm 12,805]\) \\ \(\langle321,326 \pm 5,765\rangle\)} & \thead{\bestCovr{50} \\ \((601 \pm 15.2)\) \\ \([67,063,225 \pm 207,619]\) \\ \(\langle4,924,764 \pm 104,716\rangle\)}  & \thead{49 \\ \((1,490 \pm 33.9)\) \\ \([115,518,704 \pm 447,329]\) \\ \(\langle9,968,737 \pm 218,595\rangle\)}  &                                                                                                               \\ \hline
\cgilao   & \hMax     & \thead{\bestCovr{50} \\ \((10.6 \pm 0.307)\) \\ \([2,090,103 \pm 11,607]\) \\ \(\langle100,788 \pm 452\rangle\)}   & \thead{\bestCovr{50} \\ \((473 \pm 7.72)\) \\ \([58,509,834 \pm 285,036]\) \\ \(\langle1,717,974 \pm 10,015\rangle\)}   & \thead{\bestCovr{50} \\ \((737 \pm 13.1)\) \\ \([90,024,795 \pm 604,542]\) \\ \(\langle3,208,338 \pm 17,172\rangle\)}      & \thead{11 \\ \((1,746 \pm 19.6)\) \\ \([201,475,187 \pm 2,092,087]\) \\ \(\langle6,584,678 \pm 88,077\rangle\)} \\ \hline
\ftvi     & \hMax     & \thead{\bestCovr{50} \\ \((11.7 \pm 0.189)\) \\ \([991,116 \pm 14,912]\) \\ \(\langle460,168 \pm 6,570\rangle\)}   & \thead{\bestCovr{50} \\ \((520 \pm 11.5)\) \\ \([21,539,497 \pm 356,024]\) \\ \(\langle7,726,282 \pm 108,583\rangle\)}  & \thead{\bestCovr{50} \\ \((1,147 \pm 38.7)\) \\ \([38,153,835 \pm 990,907]\) \\ \(\langle15,053,868 \pm 212,968\rangle\)}  &                                                                                                               \\ \hline
\ilao     & \hMax     & \thead{\bestCovr{50} \\ \((18.7 \pm 0.249)\) \\ \([4,444,825 \pm 21,791]\) \\ \(\langle111,278 \pm 490\rangle\)}   & \thead{\bestCovr{50} \\ \((992 \pm 11.2)\) \\ \([129,029,339 \pm 518,976]\) \\ \(\langle1,893,490 \pm 8,924\rangle\)}   & \thead{\bestCovr{50} \\ \((1,604 \pm 16.8)\) \\ \([195,835,939 \pm 828,889]\) \\ \(\langle3,536,389 \pm 16,793\rangle\)}   &                                                                                                               \\ \hline
\lrtdp    & \hMax     & \thead{\bestCovr{50} \\ \(\bestMean{(9.80 \pm 0.083)}\) \\ \([3,013,611 \pm 10,921]\) \\ \(\langle321,347 \pm 5,796\rangle\)} & \thead{\bestCovr{50} \\ \((415 \pm 8.91)\) \\ \([66,996,690 \pm 214,188]\) \\ \(\langle4,923,361 \pm 104,809\rangle\)}  & \thead{\bestCovr{50} \\ \((1,030 \pm 31.3)\) \\ \([115,558,245 \pm 461,900]\) \\ \(\langle10,022,348 \pm 213,992\rangle\)} &                                                                                                               \\ \hline
\end{tabular}%
}
\covTabCaption{\tireworld}
\label{tab:tireworld}
\end{table*}

\end{document}